\documentclass{article}

\PassOptionsToPackage{numbers}{natbib}
\usepackage[preprint]{neurips_2026}
\usepackage[utf8]{inputenc}
\usepackage[T1]{fontenc}
\usepackage{hyperref}
\hypersetup{pdfborder={0 0 0}}
\usepackage{url}
\usepackage{booktabs}
\usepackage{amsfonts}
\usepackage{amsmath}
\usepackage{amssymb}
\usepackage{nicefrac}
\usepackage{microtype}
\usepackage[table]{xcolor}
\definecolor{tabfirst}{rgb}{1, 0.7, 0.7} %
\definecolor{tabsecond}{rgb}{1, 0.85, 0.7} %
\definecolor{tabthird}{rgb}{1, 1, 0.7} %
\definecolor{appendixrule}{HTML}{D6DDE8}
\usepackage{graphicx}
\usepackage{algorithm}
\usepackage{algorithmic}
\usepackage{subcaption}
\usepackage{multirow}
\usepackage{mathtools}
\usepackage{bm}
\usepackage{enumitem}
\usepackage{amsthm}
\usepackage{wrapfig}
\usepackage[most]{tcolorbox}

\DeclareMathOperator*{\argmin}{arg\,min}

\DeclareMathOperator{\KL}{KL}

\newcommand{\E}{\mathbb{E}}

\newcommand{\bx}{\mathbf{x}}
\newcommand{\bz}{\mathbf{z}}
\newcommand{\beps}{\boldsymbol{\epsilon}}

\newtheorem{theorem}{Theorem}
\newtheorem{proposition}{Proposition}
\newtheorem{remark}{Remark}

\newcommand{\AppendixTocSection}[2]{%
  \vspace{0.35em}%
  \noindent\hyperref[#1]{\textbf{~\ref*{#1}. #2}}%
  \nobreak\leaders\hbox{\textcolor{appendixrule}{.}}\hfill\nobreak
  \hyperref[#1]{\textbf{~\pageref*{#1}}}\par
}

\newcommand{\AppendixTocSubsection}[2]{%
  \noindent\hspace*{1.35em}%
  \hyperref[#1]{\small \ref*{#1}\quad #2}%
  \nobreak\leaders\hbox{\textcolor{appendixrule}{.}}\hfill\nobreak
  \hyperref[#1]{\small ~\pageref*{#1}}\par
}

\title{TILDE: TILt-based Distributional Erasure for Concept Unlearning}

\author{%
  Naveen George\thanks{Work done during an internship at Sony AI.}\\
  Indian Institute of Technology Hyderabad\\
  \texttt{ai23mtech12001@iith.ac.in}\\
  \And
  Naoki Murata\\
  Sony AI\\
  \texttt{naoki.murata@sony.com}\\
  \And
  Yuhta Takida\\
  Sony AI\\
  \AND
  Konda Reddy Mopuri\\
  Indian Institute of Technology Hyderabad\\
  \And
  Yuki Mitsufuji\\
  Sony AI \& Sony Group Corporation
}

\begin{document}

\maketitle

\begin{abstract}
Concept unlearning in text-to-image diffusion models is critical for safe and practical deployment: with rising privacy concerns, copyright disputes, trademark constraints, and safety regulations, deployed systems must be able to suppress unwanted concepts after training.
Existing methods often remove the target concept effectively, but practical unlearning also requires an equally fundamental property: the unlearned model should retain quality, diversity, and semantic coverage on benign generation.
The gold standard is a retain-only model trained from scratch without the unwanted data.
However, common erasure objectives do not specify which post-unlearning distribution should approximate this reference, leaving retention as an implicit consequence of the update rule.
We propose \textsc{TILDE}, TILt-based Distributional Erasure, which formulates concept unlearning as a distributional alignment problem: the desired target is the minimum-deviation conditional distribution from the pretrained model under a forgetting constraint.
This energy-tilted, anchor-free target suppresses concept-expressing images while preserving benign relative mass for each prompt.
We instantiate this principle with residual $\nabla$-GFlowNet training, which learns the score correction induced by the forget energy relative to the pretrained diffusion model.
Across objects, artistic styles, and characters, \textsc{TILDE} achieves strong forgetting while improving retention and distributional fidelity over prior baselines.
\end{abstract}

\section{Introduction}
\label{sec:intro}

Text-to-image diffusion models have rapidly become part of everyday creative and professional workflows, enabling high-fidelity image creation and editing across artistic styles, fictional characters, personal identities, products, and common scenes~\citep{rombach2022high}.
As these models become more widely used, the ability to edit their behavior after training is essential for legal compliance, privacy protection, and safety.
Privacy regulations such as the GDPR recognize rights to erasure for personal data~\citep{gdpr2016}, while copyright, trademark, identity, and safety constraints can require suppressing specific concepts after training.
Concept unlearning addresses this requirement by editing a pretrained text-to-image model so that it no longer generates images expressing an undesirable concept.

The central challenge is that unlearning requires not only effective forgetting but equally effective preservation: the edited model should maintain image quality and prompt alignment on benign prompts, without degrading related or general concepts.
An edited model that successfully forgets but degrades benign generation does not approximate unlearning; it solves the forgetting problem by introducing a retention failure.
The ideal reference is a retain-only model trained from scratch without ever seeing the unwanted data: the target concept is absent, while the rest of the generative distribution remains intact~\citep{cho2025fade,zhao2024what}.
We study concept unlearning as an efficient approximation to this retain-only behavior.

\paragraph{Desiderata.}
This view yields three desiderata.
\textbf{(D1) Effective forgetting:} the post-unlearning model should assign low probability to images that express the target concept.
\textbf{(D2) Local preservation:} benign prompts that are semantically close to the forgotten concept should remain close to pretrained behavior, because this is where collateral damage is most likely.
\textbf{(D3) Distributional fidelity:} outside the forget region, the post-unlearning distribution should stay close to the pretrained or retain-only reference at the per-prompt level, not just match coarse aggregate scores such as FID~\citep{cho2025fade}.

Existing methods use erasure proxies such as score suppression~\citep{gandikota2023erasing,huang2024receler}, anchor-based editing~\citep{kumari2023ablating,gandikota2024unified,bui2025fantastic}, preference or reward-based optimization~\citep{park2024duo}, influence or trajectory-level steering~\citep{wu2024scissorhands,li2025set,wu2025erasing}, or trajectory-level GFlowNet objectives~\citep{kusumba2025eraseflow}.
These objectives can suppress the forget concept, but they generally do not specify where the resulting distribution should land.
This is precisely where retention fails: strong erasure can damage semantically adjacent benign concepts~\citep{amara2025erasebench}, while methods that maximize a scalar forgetting reward can reduce benign diversity by concentrating generation onto a narrow set of concept-free outputs~\citep{uehara2024finetuning}.
Anchor-based methods avoid some direct-suppression damage by mapping the forget concept toward a designated target concept, but this introduces a separate choice of where the erased concept should go.
Fixed anchors can be unstable or semantically biased, and recent continual-unlearning evidence shows that such coarse mappings compound into severe retention collapse under sequential deletion requests~\citep{george2026locality}; adaptive anchors make the target itself context-dependent~\citep{bui2025fantastic}.
Moreover, such objectives do not by themselves specify the desired post-unlearning distribution, leaving open how probability mass should be redistributed after erasure.
These limitations motivate an anchor-free approach that removes mass from concept-expressing regions without selecting a replacement concept.
Figure~\ref{fig:main_methods_characteristics} illustrates this distinction by contrasting incomplete erasure, including collateral damage from direct suppression and diversity collapse under mode-seeking maximization, with the proposed minimum-deviation distributional target.

\begin{figure}[t]
\centering
\includegraphics[width=0.98\textwidth]{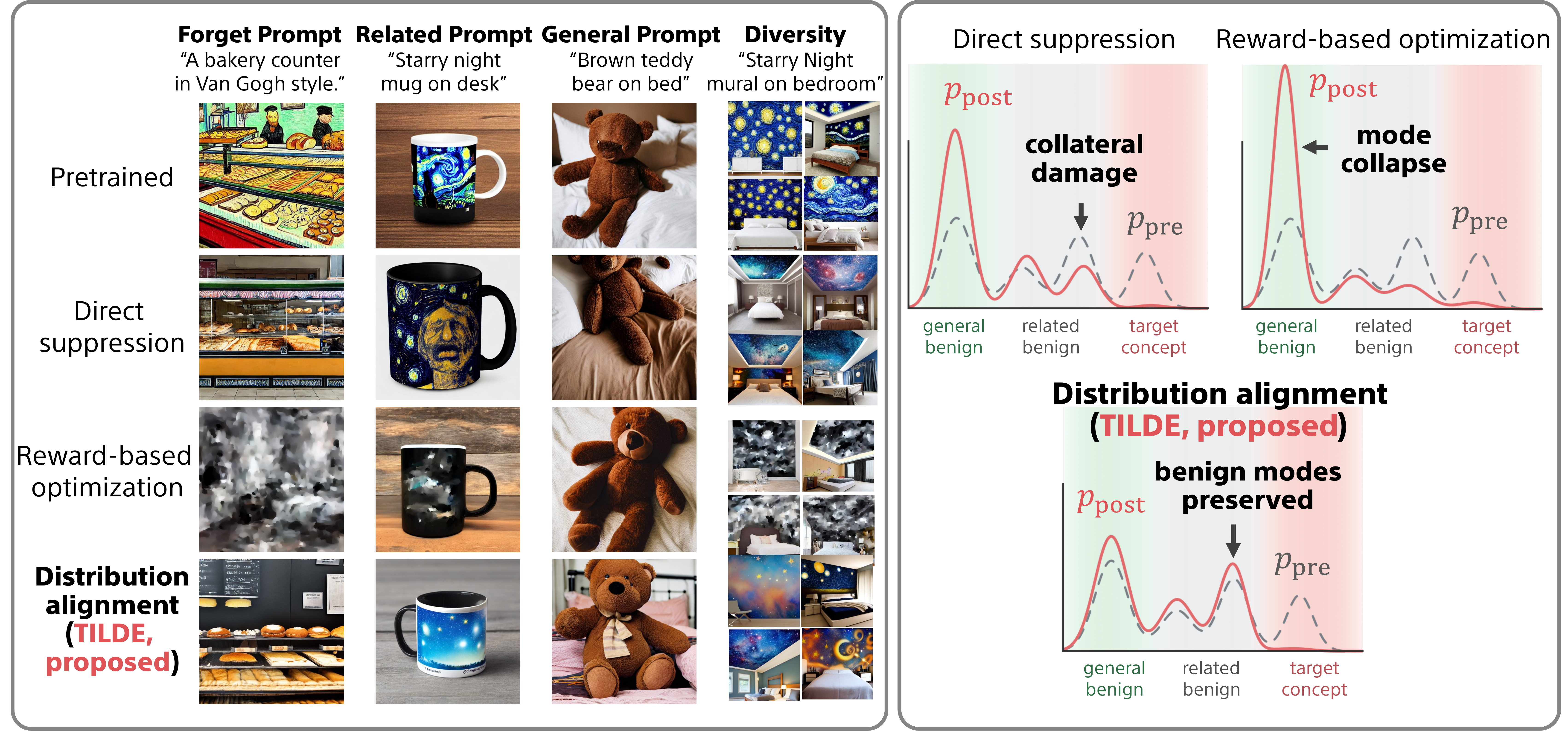}
\caption{Concept unlearning failure modes and qualitative examples for Van Gogh style removal.
Left: generations for forget, related, general, and diversity prompts under the pretrained model, direct suppression, reward-based optimization, and distribution alignment.
Right: schematic post-unlearning behavior over output regions.
Direct suppression can damage related benign modes, while reward-based optimization can collapse diversity onto a few safe modes.
Distribution alignment suppresses the target concept while preserving benign modes.}
\label{fig:main_methods_characteristics}
\end{figure}

We therefore argue that concept unlearning should be defined by its post-unlearning distribution before choosing an update rule.
The retain-only gold standard and recent evidence that reference-specific unlearning metrics can mask distributional misalignment between the unlearned model and the retain-only reference~\citep{cho2025fade}, suggest a minimal-deviation principle: among conditional distributions that sufficiently suppress the forget concept, choose the one closest to the original distribution induced by the pretrained model.
This turns unlearning into a constrained conditional distribution-optimization problem rather than a local suppression, anchor replacement, or reward maximization problem.
It also explains why retaining performance must be measured locally as well as globally: semantically nearby benign prompts are exactly where collateral damage is easiest to hide under coarse aggregate metrics.

We propose \textsc{TILDE} (TILt-based Distributional Erasure), which realizes this principle by tilting each prompt-conditioned pretrained distribution according to a forget energy.
Concept-expressing images are down-weighted, while benign images preserve their pretrained relative likelihoods up to normalization.
Section~\ref{sec:target} derives the corresponding constrained projection and its closed-form target. The resulting target is anchor-free and distributional: it specifies how probability mass should be reallocated after erasure, rather than choosing a replacement concept or maximizing a single safe output.
We therefore adopt GFlowNet-based training~\citep{bengio2023gflownet}, which targets sampling in proportion to an unnormalized terminal density.
To make this compatible with diffusion finetuning, we use residual $\nabla$-GFlowNet training~\citep{liu2025nablagfn}: the pretrained model supplies the base distribution, and the learner estimates only the score correction induced by the forget-energy tilt.

This perspective also guides evaluation: forgetting must be reported together with neighboring benign preservation and distributional fidelity to pretrained or retrained behavior.

\paragraph{Contributions.}
\begin{enumerate}
    \item We formulate concept unlearning as constrained distributional alignment, making the post-unlearning distribution target explicit and deriving the corresponding anchor-free, prompt-conditioned minimal-deviation Gibbs form.
    \item We introduce \textsc{TILDE}, which instantiates the energy-tilted target with residual $\nabla$-GFlowNet training in diffusion latent space, using a thresholded terminal forget energy, residual score matching, and LoRA updates.
    \item We evaluate unlearning of object, artistic-style and characters across forgetting, related/general retention, and distributional metrics including FID and FADE~\citep{cho2025fade}, showing improvements over prior baselines.
\end{enumerate}

\section{Target Distribution for Concept Unlearning}
\label{sec:target}

\subsection{Prompt-Conditioned Target via Constrained KL Projection}

We first make the desired post-unlearning conditional distribution explicit, instead of leaving it implicit in the choice of update rule.
Let $\pi(y)$ denote the fixed prompt population, including prompts that may elicit the concept $\mathcal{C}$, related benign prompts, and general benign prompts.
Let $E_{\mathcal C}(\bx)\ge 0$ be a measurable forget energy that quantifies concept evidence: larger values indicate that image $\bx$ more strongly expresses $\mathcal{C}$, whereas benign images, including semantically adjacent benign images, should have small or zero energy.
We define the KL-projection target by
\begin{align}
\label{eq:constrained_opt}
p^*_{\mathcal{C}}
&=
\argmin_{p}
\E_{y\sim\pi}\!\left[
\KL\!\left(p(\cdot \mid y) \,\|\, p_{\mathrm{pre}}(\cdot \mid y)\right)
\right]
\quad
\text{s.t.}
\quad
\E_{y\sim\pi,\,\bx\sim p(\cdot \mid y)}
\!\left[E_{\mathcal{C}}(\bx)\right]
\le \delta_{\mathcal{C}},
\end{align}
where the optimization is over conditional generators $p(\cdot \mid y)$, the prompt marginal $\pi(y)$ is held fixed, the KL term penalizes unnecessary deviation from the pretrained conditional, and the constraint bounds the expected concept evidence under the post-unlearning distribution to at most $\delta_{\mathcal{C}}>0$. Under standard regularity conditions for the constrained KL projection, this projection has the unique solution
\begin{align}
\label{eq:gibbs_target}
p^*_{\mathcal{C}}(\bx \mid y)
&=
\frac{p_{\mathrm{pre}}(\bx \mid y)\exp\!\big(-\beta E_{\mathcal{C}}(\bx)\big)}
{Z_{\beta,\mathcal{C}}(y)},
\end{align}
where $Z_{\beta,\mathcal{C}}(y)$ is the prompt-specific normalizing constant and $\beta \ge 0$ is the Lagrange multiplier associated with the forgetting constraint.
The tilt penalizes only concept-expressing images; benign images share the freed mass proportionally, so their pretrained relative likelihoods are preserved.

The remaining design questions are how to instantiate $E_{\mathcal{C}}$ and how to realize the resulting tilted distribution in a diffusion model.

\subsection{Practical Surrogate Target: CLIP-Based Thresholded Forget Energy}
\label{sec:reward}

In practice, we instantiate the abstract forget energy with a thresholded CLIP-based surrogate~\citep{radford2021clip}.
The CLIP score serves as a measure of concept evidence, while the threshold creates a no-penalty region for benign images.

Let $\mathcal{C}$ denote the semantic concept to be forgotten, and let $y$ denote a prompt that conditions image generation.
We use a small set of CLIP text descriptors $\mathcal{Q}_{\mathcal{C}}=\{q_1,\ldots,q_M\}$ as concept-evidence probes for $\mathcal{C}$, distinct from the generation prompt $y$. Each $q_i$ is a short, paraphrased reference to $\mathcal{C}$ that fixes the visual identity without describing scene context, so probing is robust to lexical variation. For Pikachu, for instance, $\mathcal{Q}_{\mathcal{C}}=\{$``Pikachu'', ``a photo of Pikachu'', ``an image of Pikachu the Pokemon'', ``Pikachu, the yellow Pokemon character''$\}$; analogous sets are used for artistic styles, e.g., ``Van Gogh style''.
For a generated image $\bx$, we compute the average CLIP similarity to these descriptors, $s_{\mathcal{C}}(\bx)=|\mathcal{Q}_{\mathcal{C}}|^{-1}\sum_{q\in\mathcal{Q}_{\mathcal{C}}}
\cos(\mathrm{CLIP}_{\mathrm{img}}(\bx),\,\mathrm{CLIP}_{\mathrm{txt}}(q))$, and refer to this quantity as CLIP concept evidence.
We then define the forget energy as
\begin{align}
\label{eq:reward}
\widetilde{E}_{\mathcal{C}}(\bx)
&=
\begin{cases}
0, & \text{if } s_{\mathcal{C}}(\bx)\le \tau, \\
\lambda_{\mathrm{scale}}
\left(
\exp\bigl(\alpha (s_{\mathcal{C}}(\bx)-\tau)\bigr)-1
\right), & \text{if } s_{\mathcal{C}}(\bx)>\tau.
\end{cases}
\end{align}
Here $\tau$ is the CLIP-score threshold, $\alpha$ controls the sharpness above the threshold, and $\lambda_{\mathrm{scale}}$ sets the overall penalty strength, including the Gibbs multiplier $\beta$.

The threshold is the key design choice.
Without it, a continuous penalty based on $s_{\mathcal C}(\bx)$ would exert nonzero gradients even on benign images with weak concept similarity.
The thresholded energy instead creates a no-penalty region below $\tau$, while applying the energy tilt primarily to high-evidence images above $\tau$.

This yields the practical surrogate target
\begin{align}
\label{eq:practical_target}
p_{\mathrm{tar},\mathcal{C}}(\bx \mid y)
&\propto
p_{\mathrm{pre}}(\bx \mid y)\exp\!\big(-\widetilde{E}_{\mathcal{C}}(\bx)\big).
\end{align}
This gives an anchor-free target: rather than prescribing a replacement concept, the tilt reweights the pretrained conditional density according to concept evidence.

\subsection{Implication for Optimization}

The target to realize is the prompt-conditioned tilted density in Eq.~\eqref{eq:practical_target}, rather than to seek modes that maximize a scalar forgetting reward.
Accordingly, the learning objective is to realize proportional sampling from this target, preserving benign-region coverage while down-weighting high-energy images.
We adopt residual $\nabla$-GFlowNet because diffusion models are score-parameterized and Eq.~\eqref{eq:practical_target} has a pretrained-times-energy-tilt structure that residual training matches directly.
The next section details this realization.

\section{Diffusion Realization via Residual \texorpdfstring{$\nabla$}{nabla}-GFlowNet}
\label{sec:optimization}

We fix $\mathcal{C}$ throughout this section and use the practical thresholded energy $\widetilde{E}_{\mathcal{C}}$ from Section~\ref{sec:reward}. Our diffusion realization instantiates the residual $\nabla$-GFlowNet machinery~\citep{liu2025nablagfn} with the forget-energy-tilted target in Eq.~\eqref{eq:practical_target}.
Consider a latent diffusion model~\citep{rombach2022high}.
For a prompt $y$, let $\bz_T\sim\mathcal{N}(\mathbf{0},\mathbf{I})$ be the initial noise and let $\bz_T\to\bz_{T-1}\to\cdots\to\bz_0$ denote the denoising chain, with decoded image $\bx=\mathcal{D}(\bz_0)$.
We write $P_F^\theta(\bz_{t-1}\mid\bz_t,y)$ for the LoRA-adapted denoising transition and $P_F^{\rm pre}(\bz_{t-1}\mid\bz_t,y)$ for the pretrained transition.
The subscript $F$ follows the GFlowNet convention~\citep{bengio2023gflownet} and denotes the sampling, or denoising, direction, not the forward noising process used in diffusion training.
We use standard DDPM notation, with $\alpha_t=1-\beta_t$ and $\bar{\alpha}_t=\prod_{s=1}^t\alpha_s$.

\begin{figure}[t]
	\centering
	\includegraphics[width=0.95\textwidth]{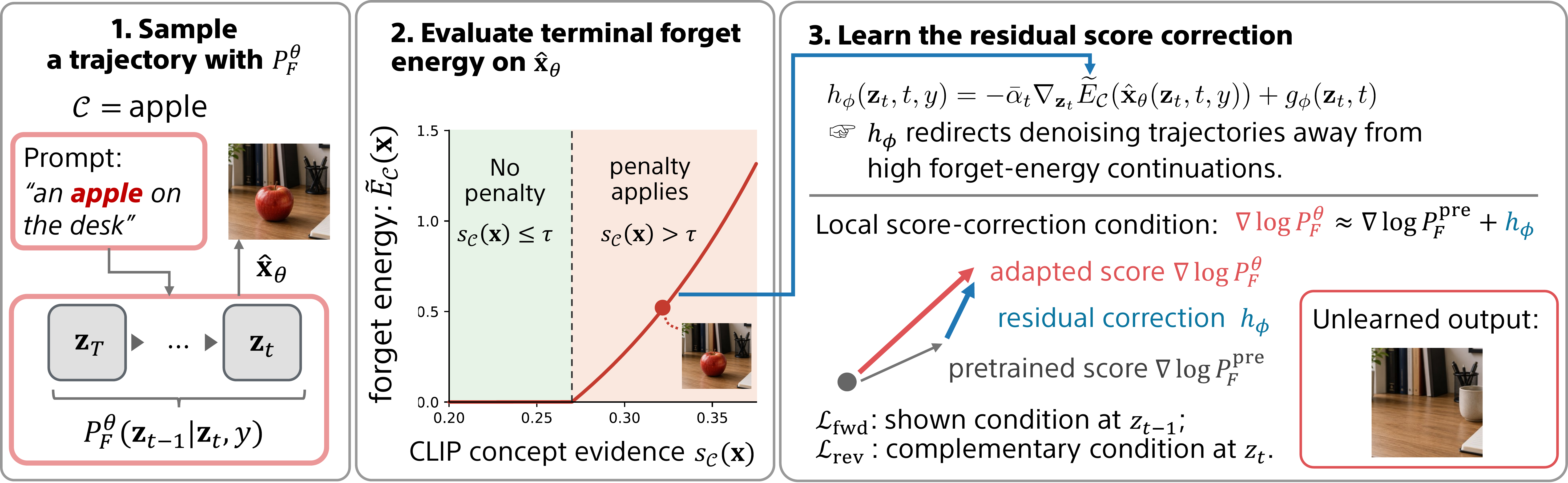}
    \caption{Overview of \textsc{TILDE} training in diffusion latent space. Given a target concept $\mathcal{C}$ and a sampled prompt $y$, the LoRA-adapted denoiser samples a trajectory $\bz_T \rightarrow \cdots \rightarrow \bz_0$, and evaluates the thresholded forget energy $\widetilde{E}_{\mathcal{C}}(\hat{\bx})$ from CLIP concept evidence. Residual $\nabla$-GFlowNet training then matches the score correction between $P_F^\theta(\bz_{t-1}\mid\bz_t,y)$ and $P_F^{\rm pre}(\bz_{t-1}\mid\bz_t,y)$ using the continuation score $h_\phi(\bz_t,t,y)$, steering denoising trajectories away from high-energy concept-expressing outputs while preserving low-energy benign dynamics.}
	\label{fig:method_overview}
\end{figure}

\paragraph{Residual balance condition.}
Eq.~\eqref{eq:practical_target} defines a clean-image tilt of the pretrained sampler.
The remaining question is how this terminal reweighting should be propagated through the denoising step.
We introduce a positive residual continuation weight $\widetilde{F}_t(\bz_t,y)$.
Intuitively, it is the benign mass still reachable from $\bz_t$: large when continuations tend to end in low-energy images, and small when they tend to end in concept-expressing high-energy images.
The adapted transition should therefore change the pretrained transition only by favoring next states with larger continuation weight:
\begin{align}
\label{eq:residual_ratio_balance}
\frac{P_F^\theta(\bz_{t-1}\mid\bz_t,y)}
{P_F^{\rm pre}(\bz_{t-1}\mid\bz_t,y)}
&=
\frac{\widetilde{F}_{t-1}(\bz_{t-1},y)}
{\widetilde{F}_t(\bz_t,y)} .
\end{align}
This condition is useful because it factorizes the target path-probability ratio locally.
Along any trajectory $\bz_T \to \cdots \to \bz_0$, the intermediate $\widetilde{F}_t$ terms cancel, leaving only $\widetilde{F}_0(\bz_0,y)/\widetilde{F}_T(\bz_T,y)$.
With $\widetilde{F}_0(\bz_0,y)\propto\exp(-\widetilde{E}_{\mathcal{C}}(\mathcal{D}(\bz_0)))$ and $\widetilde{F}_T(\bz_T,y)=c_{\mathrm{init}}(y)$ independent of $\bz_T$, the terminal marginal is proportional to $p_{\rm pre}(\bz_0\mid y)\exp(-\widetilde{E}_{\mathcal{C}}(\mathcal{D}(\bz_0)))$, matching Eq.~\eqref{eq:practical_target}.
Thus the condition converts a terminal energy into proportional sampling: concept-expressing images lose mass, while benign modes keep their pretrained relative proportions.

Since $\widetilde{F}_t$ is unknown, we learn only its score
\begin{align*}
h_\phi(\bz_t,t,y) := \nabla_{\bz_t}\log \widetilde{F}_t(\bz_t,y),
\end{align*}
and use it to define the score-form training objective below.
Training jointly optimizes the LoRA parameters and the auxiliary continuation-score parameters $\phi$.

\paragraph{Score-form training objective.}
Directly enforcing Eq.~\eqref{eq:residual_ratio_balance} is inconvenient in diffusion models because it involves transition densities.
Following residual $\nabla$-GFlowNet~\citep{liu2025nablagfn}, we therefore use its score-form counterpart, which can be expressed using the local score information provided by the denoiser.
Differentiating Eq.~\eqref{eq:residual_ratio_balance} with respect to $\bz_{t-1}$ and using that $\widetilde{F}_t(\bz_t,y)$ does not depend on $\bz_{t-1}$ gives the local score-correction condition:
\begin{align}
\label{eq:residual_forward_db}
\nabla_{\bz_{t-1}}\log P_F^\theta(\bz_{t-1}\mid\bz_t,y)
-
\nabla_{\bz_{t-1}}\log P_F^{\rm pre}(\bz_{t-1}\mid\bz_t,y)
&=
h_\phi(\bz_{t-1},t-1,y).
\end{align}
The left-hand side is the score correction introduced by finetuning, measuring the change in local probability flow relative to the pretrained denoiser.
The right-hand side is the auxiliary estimate of the residual continuation score required by the same balance condition.
We enforce this local consistency by penalizing the mismatch between the two vector fields, yielding the forward loss
\begin{align}
\label{eq:loss_forward_main}
\mathcal{L}_{\mathrm{fwd}}
&=
\E\!\left[
\left\|
\nabla_{\bz_{t-1}}\log P_F^\theta(\bz_{t-1}\mid\bz_t,y)
-
\nabla_{\bz_{t-1}}\log P_F^{\rm pre}(\bz_{t-1}\mid\bz_t,y)
-
h_\phi(\bz_{t-1},t-1,y)
\right\|^2
\right].
\end{align}
We also impose the complementary score condition obtained by differentiating the same residual balance relation with respect to $\bz_t$, yielding the reverse loss $\mathcal{L}_{\mathrm{rev}}$.
Together, $\mathcal{L}_{\mathrm{fwd}}$ and $\mathcal{L}_{\mathrm{rev}}$ make the residual correction locally consistent across each denoising transition.
Unlike standard denoising-score training~\citep{ho2020denoising,song2021scorebased}, the target is not ground-truth noise or a data score, but the residual correction that turns the pretrained sampler into the energy-tilted unlearned sampler.

The residual score $h_\phi(\bz_t,t,y)$ is the score of the residual continuation weight induced by the terminal forget energy.
Since this score is not available in closed form at intermediate noisy states, we use the forward-looking residual parameterization
\begin{align}
\label{eq:forward_looking}
h_\phi(\bz_t,t,y)
&=
-\bar{\alpha}_t
\nabla_{\bz_t}\widetilde{E}_{\mathcal{C}}\!\left(\hat{\bx}_\theta(\bz_t,t,y)\right)
+
g_\phi(\bz_t,t).
\end{align}
Here $\hat{\bx}_\theta(\bz_t,t,y)$ is the one-step clean-image estimate predicted from the current noisy latent, and $\bar{\alpha}_t$ is the standard cumulative DDPM signal coefficient.
The first term is the one-step pullback of the terminal-energy score, and $g_\phi$ parameterizes the remaining approximation residual of this continuation score.

\paragraph{Interpretation of $h_\phi$.}
The residual field $h_\phi$ can be interpreted as a state-dependent guide for how the adapted denoiser should deviate from the pretrained denoiser. Instead of manually specifying separate forget and retain directions, or selecting an anchor concept, TILDE lets the thresholded forget energy determine this direction from the current one-step clean estimate $\hat{x}_\theta(z_t,t,y)$. When the estimate has high concept evidence, $\nabla_{z_t}\tilde{E}_C(\hat{x}_\theta)$ is nonzero, so $h_\phi$ induces an active residual score correction that redirects the denoising trajectory away from concept-expressing regions. When the estimate lies below the threshold, the energy-pullback term is exactly zero, so the residual balance condition demands that the finetuned transition match the pretrained transition at these states. Benign trajectories are therefore encouraged to remain close to pretrained behavior.

\paragraph{Overall loss and design rationale.}
We jointly optimize the LoRA update $\Delta\theta$ and the auxiliary continuation-score parameters $\phi$ by minimizing
\begin{align}
\label{eq:loss_total}
\mathcal{L}
&=
\mathcal{L}_{\mathrm{fwd}}
+
\lambda_{\mathrm{rev}}\mathcal{L}_{\mathrm{rev}}
+
\lambda_{\mathrm{term}}\mathcal{L}_{\mathrm{term}}.
\end{align}
Here, $\lambda_{\mathrm{rev}}$ and $\lambda_{\mathrm{term}}$ are balancing weights, and $\mathcal{L}_{\mathrm{term}}$ penalizes $\|g_\phi(\bz_0,0)\|_2^2$, keeping the clean-end boundary condition determined solely by the forget energy.

Our contribution is the unlearning-specific distributional target and its full realization in diffusion latent space; residual $\nabla$-GFlowNet provides the optimization backbone that makes this realization practical.
Each component follows from this target.
The residual form is appropriate because Eq.~\eqref{eq:practical_target} changes each prompt-conditioned pretrained distribution by a multiplicative energy tilt.
The score form is appropriate because diffusion models expose denoising scores.
The forward-looking correction is appropriate because the forget signal is terminal, while the denoising chain is long.
Finally, the thresholded energy from Section~\ref{sec:reward} yields no direct terminal-energy force outside the forget region, which is essential for preserving nearby benign concepts.
We also restrict updates to LoRA adapters~\citep{hu2022lora}, adding a parameter-space minimal-update bias to the distribution-space minimal-deviation target.
    
\section{Related Work}
\label{sec:related}

\paragraph{Current concept-unlearning methods.}
Existing approaches include direct suppression and selective finetuning (e.g., ESD, SalUn, and Selective Amnesia)~\citep{gandikota2023erasing,fan2024salun,heng2023selective}, target-space editing frameworks such as UCE and MACE~\citep{gandikota2024unified,lu2024mace}, influence and trajectory-steering methods~\citep{wu2024scissorhands,li2025set,wu2025erasing}, and preference/reward-based optimization such as DUO, DDPO, and DRaFT~\citep{park2024duo,black2024ddpo,clark2024draft}.
These methods can remove target concepts effectively, but they often leave the final post-unlearning distribution implicit.

\paragraph{Anchors, mapping, and dynamic targets.}
Many concept-erasure methods can be interpreted as choosing a destination representation and mapping the forget concept toward that target.
Concept Ablation uses explicit anchor replacement~\citep{kumari2023ablating}, while other methods use broader target mappings at the embedding or attention level~\citep{gandikota2024unified,lu2024mace}.
AGE argues that fixed generic anchors are often suboptimal and proposes adaptive/dynamic target selection based on concept context~\citep{bui2025fantastic}.
LACU further shows that fixed anchors are especially brittle when deletion requests arrive sequentially, because large per-step displacements accumulate and damage neighboring concepts~\citep{george2026locality}.

\paragraph{GFlowNet-based and distributional methods.}
Our work builds on $\nabla$-GFlowNet~\citep{liu2025nablagfn}, originally developed for reward-aligned sampling in diffusion models.
EraseFlow~\citep{kusumba2025eraseflow} is the closest unlearning-specific GFlowNet baseline, which uses a trajectory-balance (TB) loss~\citep{malkin2022trajectory} on approximated log-probabilities; compared with it, we use score-based residual $\nabla$-DB and learn only the deviation from the pretrained model rather than training a full sampler from scratch.

\paragraph{Failures, robustness, and side effects.}
Recent evidence shows that apparent forgetting is often fragile: erased concepts can be revived or bypassed under adversarial prompting and model probing~\citep{hsu2024ring,zhang2024generate}, and relearning/revival remains a practical threat~\citep{george2025illusion,gao2025meta,lu2025concepts}.
This has motivated robust unlearning methods that incorporate adversarial training and robustness-aware objectives~\citep{zhang2024defensive,srivatsan2025stereo}.
Another persistent failure mode is the ripple effect on nearby concepts, where semantically adjacent benign concepts are unintentionally degraded after unlearning~\citep{amara2025erasebench}.
Related retention-focused unlearning work studies gradient conflicts between forget and retain objectives as one mechanism behind this trade-off~\citep{patel2025learning}.
Complementary analysis shows that gradient-ascent unlearning can fail because forget and retain data are statistically coupled, so ascent can move the model away from the retrained reference rather than toward it~\citep{mavrothalassitis2026ascent}.

\section{Experiments}
\label{sec:experiments}

\subsection{Experimental Setup}
\label{sec:setup}

\noindent\textbf{Base Model and Training.}
All experiments are performed on Stable Diffusion v1.5~\citep{rombach2022high}, the standard base model for concept unlearning research.
We finetune LoRA adapters (rank $r = 8$) on the attention layers using AdamW with DDPM sampling; full training details and results on other SD variants are provided in the supplementary material.

\noindent\textbf{Concepts.}
We evaluate across four categories: objects, characters (e.g., celebrities, fictional characters), artistic styles, and nudity, covering a total of approximately \textbf{60 distinct concepts} including targets, semantically related concepts, and general themes.

\noindent\textbf{Unlearning Prompts.}
For each concept $\mathcal{C}$, we construct a forget prompt set $\mathcal{Y}_{\mathcal{C}}$ with \textbf{30 diverse prompts} using GPT-4o-mini, guided by a system prompt that enforces visual grounding (the concept must be a clearly depictable, in-frame subject), scene and category diversity (varied indoor/outdoor settings, framings, and activities), and lexical simplicity (short, single-clause natural-language phrasings).
The same prompt generation pipeline is reused, with minor wording changes to the system prompt, when retain prompts are added to training (Section~\ref{sec:results}, Q5): we first ask the LLM to enumerate related concepts and then sample concept-disjoint scenes for each.
Unless stated otherwise, $\mathcal{Q}_{\mathcal{C}}$ contains the canonical text descriptor of $\mathcal{C}$.

\subsection{Evaluation Metrics}
\label{sec:metrics}

For every concept, we use \textbf{20 unique evaluation prompts} and generate \textbf{8 images per prompt}, giving approximately \textbf{10,000 images} per model checkpoint.
All evaluation prompts are unseen during training.

\noindent\textbf{VLM-Based Accuracy Metrics.}
We use Qwen2.5-VL-7B-Instruct~\citep{bai2025qwen2} as our automated evaluator, querying it with binary questions on generated images.
VLMs are substantially more reliable than ImageNet-style classifiers or UnlearnCanvas~\citep{zhang2024unlearncanvas} style-classifier heads, which struggle with diverse concepts such as celebrities and artistic styles; supporting evidence is in the supplementary.

\begin{itemize}[leftmargin=1.5em, itemsep=1pt, topsep=2pt]
  \item \textbf{Unlearning Accuracy ($U_{\mathrm{acc}}$ $\uparrow$) \& CLIP Score ($U_{\mathrm{clip}}$ $\uparrow$):}
    $U_{\mathrm{acc}}$ measures forgetting effectiveness; $U_{\mathrm{clip}}$ measures ``in-prompt retainability''~\citep{ren2025sixcd}, ensuring benign parts of the prompt are still generated correctly.
  \item \textbf{Related Retention ($RR_{\mathrm{acc}}$ $\uparrow$; $RR_{\mathrm{clip}}$ $\uparrow$):}
    $RR_{\mathrm{acc}}$ measures unintended collateral damage on semantically adjacent concepts (e.g., testing ``Impressionism'' after unlearning ``Van Gogh''); $RR_{\mathrm{clip}}$ verifies alignment is preserved.
  \item \textbf{General Retention ($GR_{\mathrm{acc}}$ $\uparrow$; $GR_{\mathrm{clip}}$ $\uparrow$):}
    $GR_{\mathrm{acc}}$ assesses overall knowledge preservation on general, unrelated prompts; $GR_{\mathrm{clip}}$ verifies text-to-image alignment remains intact.
\end{itemize}

\noindent\textbf{Distributional Metrics: FID and FADE.}
We follow the UnlearnCanvas~\citep{zhang2024unlearncanvas} evaluation protocol for \textbf{FID}~\citep{heusel2017fid}, measuring distributional quality of generated images on retain prompts.

For a more principled distributional assessment, we also report \textbf{FADE}~\citep{cho2025fade}, which measures functional alignment between the unlearned model and a gold-standard retain-only model by comparing bidirectional likelihood assignments over generated samples:
\begin{align}
\label{eq:fade}
\mathrm{FADE} \coloneqq
\mathbb{E}_{\bx \sim p_{\mathrm{retain}}(\cdot\mid y)}\!\left[\log\frac{p_{\mathrm{retain}}(\bx\mid y)}{p_{\mathrm{unlearn}}(\bx\mid y)}\right]
+
\mathbb{E}_{\bx \sim p_{\mathrm{unlearn}}(\cdot\mid y)}\!\left[\log\frac{p_{\mathrm{unlearn}}(\bx\mid y)}{p_{\mathrm{retain}}(\bx\mid y)}\right].
\end{align}
FADE is non-negative and equals zero only when the two distributions are identical; a lower score indicates closer alignment to genuine unlearning.
For diffusion models, FADE is tractably estimated via weighted differences of denoising MSE losses~\citep{cho2025fade}.

\subsection{Experimental Results and Analysis}
\label{sec:results}

\begin{table}[t]
\centering
\caption{Category-level concept unlearning performance averaged over concepts in each group. Base Stable Diffusion v1.5 reported \textbf{SD v1.5 base: Accuracy 89\%, CLIP score 32.6}.
\textbf{Objects} averages apple, banana, golf ball, and cat.
\textbf{Characters} averages Brad Pitt, Lionel Messi, Mickey Mouse, and Pikachu.
\textbf{Style} averages cartoon style, Van~Gogh style, and Monet style.
Each category reports unlearning accuracy ($U_{\mathrm{Acc}}$), related-retain accuracy ($RR_{\mathrm{Acc}}$), and general-retain accuracy ($GR_{\mathrm{Acc}}$).
The final columns report average FADE and FID over the style concepts when available.}
\label{tab:main_comparison}
\vspace{0.5em}
\setlength{\tabcolsep}{2.7pt}
\resizebox{\textwidth}{!}{%
\begin{tabular}{@{}lccccccccccc@{}}
\toprule
\multirow{2}{*}{\textbf{Method}}
& \multicolumn{3}{c}{\textbf{Objects}}
& \multicolumn{3}{c}{\textbf{Characters}}
& \multicolumn{3}{c}{\textbf{Style}}
& \multirow{2}{*}{\shortstack{\textbf{Avg.} \\ \textbf{FADE} $\downarrow$}}
& \multirow{2}{*}{\shortstack{\textbf{Avg.} \\ \textbf{FID} $\downarrow$}} \\
\cmidrule(lr){2-4}\cmidrule(lr){5-7}\cmidrule(l){8-10}
& $U_{\mathrm{Acc}} \uparrow$ & $RR_{\mathrm{Acc}} \uparrow$ & $GR_{\mathrm{Acc}} \uparrow$
& $U_{\mathrm{Acc}} \uparrow$ & $RR_{\mathrm{Acc}} \uparrow$ & $GR_{\mathrm{Acc}} \uparrow$
& $U_{\mathrm{Acc}} \uparrow$ & $RR_{\mathrm{Acc}} \uparrow$ & $GR_{\mathrm{Acc}} \uparrow$
& & \\
\midrule
ESD-u & 0.74 & 0.58 & 0.79 & \cellcolor{tabthird}0.97 & 0.36 & 0.77 & 0.55 & \cellcolor{tabthird}0.81 & \cellcolor{tabthird}0.84 & 254 & 10.0 \\
ESD-x & 0.77 & 0.58 & 0.78 & \cellcolor{tabsecond}0.98 & 0.23 & 0.55 & 0.70 & 0.71 & 0.82 & 200 & 9.5 \\
UCE & 0.62 & \cellcolor{tabsecond}0.87 & 0.83 & 0.92 & 0.57 & 0.67 & 0.29 & \cellcolor{tabfirst}0.93 & \cellcolor{tabsecond}0.87 & \cellcolor{tabsecond}138 & 14.8 \\
CA & 0.61 & \cellcolor{tabfirst}0.91 & \cellcolor{tabfirst}0.90 & \cellcolor{tabthird}0.97 & \cellcolor{tabthird}0.59 & \cellcolor{tabsecond}0.87 & \cellcolor{tabthird}0.86 & 0.72 & \cellcolor{tabsecond}0.87 & 152 & \cellcolor{tabsecond}7.3 \\
MACE & 0.44 & 0.79 & 0.83 & \cellcolor{tabfirst}0.99 & 0.17 & 0.22 & \cellcolor{tabfirst}1.00 & 0.03 & 0.01 & 67370 & 181.1 \\
DUO & \cellcolor{tabthird}0.91 & 0.76 & \cellcolor{tabsecond}0.88 & \cellcolor{tabthird}0.97 & 0.46 & \cellcolor{tabthird}0.86 & 0.56 & 0.49 & \cellcolor{tabsecond}0.87 & 358 & \cellcolor{tabthird}9.0 \\
EraseFlow & 0.82 & 0.53 & 0.72 & \cellcolor{tabfirst}0.99 & 0.16 & 0.57 & \cellcolor{tabthird}0.86 & 0.52 & 0.72 & 400 & 17.6 \\
Meta & \cellcolor{tabsecond}0.95 & 0.36 & 0.72 & \cellcolor{tabfirst}0.99 & 0.21 & 0.71 & 0.69 & 0.27 & 0.56 & 40435 & 21.9 \\
SHS~\citep{wu2024scissorhands} & 0.84 & 0.30 & 0.35 & 0.85 & 0.34 & 0.32 & 0.80 & 0.17 & 0.30 & \cellcolor{tabthird}150 & 171.3 \\
EDiff~\citep{wu2025erasing} & 0.39 & \cellcolor{tabthird}0.83 & \cellcolor{tabsecond}0.88 & 0.52 & \cellcolor{tabfirst}0.78 & \cellcolor{tabfirst}0.88 & 0.32 & \cellcolor{tabsecond}0.85 & \cellcolor{tabfirst}0.88 & 287 & \cellcolor{tabfirst}6.8 \\
\midrule
\textbf{\textsc{TILDE}} & \cellcolor{tabfirst}0.97 & 0.72 & \cellcolor{tabthird}0.86 & \cellcolor{tabfirst}0.99 & \cellcolor{tabsecond}0.64 & \cellcolor{tabsecond}0.87 & \cellcolor{tabsecond}0.88 & 0.76 & \cellcolor{tabthird}0.84 & \cellcolor{tabfirst}130 & 9.6 \\
\bottomrule
\end{tabular}%
}
\end{table}

\raggedbottom

\paragraph{Q1: How well do existing methods retain overall model performance during unlearning?}

Table~\ref{tab:main_comparison} reveals a consistent failure mode: strong forgetting comes at the expense of collateral damage.
Hard erasure (MACE) reaches near-perfect unlearning but destroys related and general-retain accuracy; softer suppression (ESD variants) preserves benign behavior but rarely reaches the needed level of erasure.
Anchor-based CA~\citep{kumari2023ablating} and UCE~\citep{gandikota2024unified} rely on \emph{explicit retain supervision} (anchor-concept distillation and a preserve-set attention edit, respectively), which buys image quality but still leaves residual concept evidence.
Preference-based DUO is inconsistent across concept types, and the closest GFlowNet baseline (EraseFlow) shows particularly sharp retention drops on adjacent concepts.

\paragraph{Q2: How well does \textsc{TILDE} perform?}

\textsc{TILDE} achieves near-complete forgetting across all category types while substantially improving retention over methods at the same unlearning operating point — and it does so \emph{without} the explicit retain set or anchor concept that UCE and CA depend on, placing it on the Pareto frontier.
With retain prompts optionally added, retention improves further (Q5).
The advantage is most pronounced on character and style concepts; on objects, related-retain remains harder due to tighter semantic coupling with neighboring concepts.

\paragraph{Q3: Why does the GFlowNet formulation help?}

We ablate each ingredient of the residual $\nabla$-GFlowNet objective on the Pikachu concept (Table~\ref{tab:gfn_ablation}); recall that \citep{liu2025nablagfn} replaces intractable absolute log-flow targets with a gradient-form (score-matching) loss well-conditioned in latent space, and that the residual $g_\phi$ restricts learning to the score \emph{correction} relative to the pretrained model.

\noindent\begin{minipage}[t]{0.48\textwidth}
\vspace{0pt}
The $\nabla$ form is the primary driver of forgetting: dropping to 0th-order DB collapses unlearning to 0.40, since absolute-flow targets are too poorly conditioned in latent space to suppress the concept.
The residual $g_\phi$ is the primary driver of retention: removing it forces the full score field to absorb the forget update, dropping general retention to 0.83.
DDPO-style reward maximization keeps forgetting strong but collapses general retention to 0.33, confirming that GFlowNet's proportional-sampling property is what prevents mode collapse onto concept-free outputs.
\end{minipage}%
\hfill
\begin{minipage}[t]{0.48\textwidth}
\vspace{0pt}
\centering
\captionof{table}{Ablation of the $\nabla$-GFlowNet formulation on Pikachu concept unlearning.
All variants share the same thresholded forget energy and LoRA parameterization.}
\label{tab:gfn_ablation}
\vspace{0.5em}
\setlength{\tabcolsep}{4pt}
\resizebox{\linewidth}{!}{%
\begin{tabular}{@{}lcccc@{}}
\toprule
\textbf{Method} & $U_{\mathrm{acc}} \uparrow$ & $RR_{\mathrm{acc}} \uparrow$ & $GR_{\mathrm{acc}} \uparrow$ \\
\midrule
\textbf{\textsc{TILDE}} (resi $\nabla$-GFlowNet) & 0.99 & 0.76 & 0.88 \\
w/o residual                               & 0.96 & 0.79 & 0.87 \\
w/o $\nabla$ form                          & 0.40 & 0.81 & 0.87 \\
w/o $g_\phi$                               & 0.97 & 0.72 & 0.83 \\
DDPO (reward max.)                         & 0.98 & 0.61 & 0.33 \\
\bottomrule
\end{tabular}%
}
\end{minipage}

\paragraph{Q4: How does the energy threshold $\tau$ affect unlearning?}

The thresholded forget energy (Eq.~\ref{eq:reward}) is a key design choice: only images with CLIP concept evidence above $\tau$ receive a direct terminal-energy penalty.
Figure~\ref{fig:tau_ablation} traces the accuracy and retention trade-offs across a range of threshold values on the Pikachu concept.

\noindent\begin{minipage}[t]{0.48\textwidth}
\vspace{0pt}
Removing the threshold entirely (not shown) causes catastrophic retention collapse: $U_{\mathrm{acc}}{=}1.00$ but $RR_{\mathrm{acc}}{=}0.08$, $GR_{\mathrm{acc}}{=}0.49$, since every sample receives an energy penalty regardless of concept relevance.
As $\tau$ rises from a low value, forgetting stays strong while retention steadily improves, reflecting the growing no-tilt region that shields benign samples from direct terminal-energy gradients.
Our chosen $\tau{=}0.22$ strikes the point where forgetting remains near-complete and retention peaks; pushing $\tau$ higher causes forgetting to degrade sharply while retention gains diminish.
The reason: below $\tau$, $\widetilde{E}_{\mathcal{C}}(\bx) = 0$ exactly, so the explicit terminal-energy force vanishes for those clean predictions, leaving only indirect effects through residual-flow propagation and shared LoRA parameters.
\end{minipage}%
\hfill
\begin{minipage}[t]{0.48\textwidth}
\vspace{0pt}
\centering
\includegraphics[width=\linewidth]{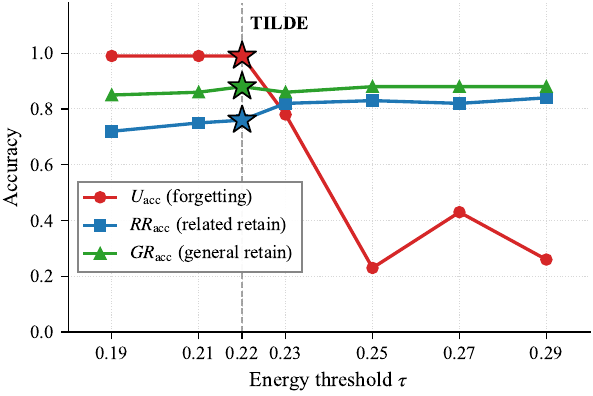}
\captionof{figure}{Effect of the energy threshold $\tau$ on Pikachu concept unlearning. Stars mark our chosen $\tau{=}0.22$ (\textsc{TILDE}). Removing the threshold entirely yields $(U,RR,GR)=(1.00, 0.08, 0.49)$ (off-axis).}
\label{fig:tau_ablation}
\end{minipage}

\paragraph{Q5: Does including retain prompts during unlearning help further?}

\begin{wraptable}{r}{0.48\textwidth}
\vspace{-1.5em}
\centering
\caption{Effect of including retain prompts during unlearning on Pikachu concept.
Adding retain prompts improves overall retention without affecting unlearning accuracy.}
\label{tab:retain_prompts}
\vspace{0.3em}
\setlength{\tabcolsep}{4pt}
\resizebox{\linewidth}{!}{%
\begin{tabular}{@{}lccc@{}}
\toprule
\textbf{Setting} & $U_{\mathrm{acc}} \uparrow$ & $RR_{\mathrm{acc}} \uparrow$ & $GR_{\mathrm{acc}} \uparrow$ \\
\midrule
\textsc{TILDE} & 0.99 & 0.76 & 0.88 \\
\textsc{TILDE} + General retain & 0.98 & 0.82 & 0.88 \\
\textsc{TILDE} + Related retain & 0.99 & 0.84 & 0.88 \\
\bottomrule
\end{tabular}%
}
\vspace{-1em}
\end{wraptable}

Since benign images below $\tau$ receive no direct terminal-energy penalty, general or related-retain prompts can be naturally mixed into the training batch as a regularizer toward pretrained benign behavior (Table~\ref{tab:retain_prompts}).

Both variants improve related-retain accuracy without measurably affecting unlearning, consistent with the thresholded-energy prediction: below-threshold samples receive no concept-suppression gradient and act only as anchors toward pretrained behavior.
Related-retain prompts give a slightly larger gain than general ones, since they probe the exact neighborhood where collateral damage leaks in through shared LoRA parameters.
Crucially, this benefit is a free byproduct of thresholding: unlike CA and UCE, our pipeline absorbs retain prompts as ordinary training samples with no extra loss term, so the Pareto improvement of Q2 strictly extends rather than trades off when retain supervision is available.

\section{Discussion}
\label{sec:discussion}

This work casts concept unlearning as distributional alignment rather than local suppression or anchor replacement: \textsc{TILDE} derives a minimum-deviation, energy-tilted target under a forgetting constraint, instantiated with thresholded CLIP energy and residual $\nabla$-GFlowNet training in diffusion latent space.
The results support the main prediction: strong forgetting without broad collateral damage when the target distribution is specified explicitly.
More broadly, this framing unifies existing concept-erasure methods: hard-suppression objectives correspond to extreme tilts, anchor-based methods substitute a manually chosen target distribution, and reward-maximization variants drop the proportional-sampling property needed for diversity preservation.
Within this view, \textsc{TILDE} is the natural minimal modification of the pretrained model that satisfies the forgetting constraint, while residual $\nabla$-GFlowNet training makes the alignment efficient in practice.

\paragraph{Limitations and future work.}
The forget energy must reliably measure concept evidence; CLIP similarity suffices for the objects, characters, identities and styles we evaluate, but ambiguous or highly compositional concepts may need stronger energies, such as VLM ensembles or task-specific classifiers.
The same residual $\nabla$-GFlowNet machinery transfers without modification once a calibrated energy is provided.
The threshold $\tau$ is a hyperparameter, but in our experiments it remained stable across concepts within a category.

\bibliographystyle{plainnat}
\bibliography{references}

@inproceedings{zhang2024unlearncanvas,
  title={{UnlearnCanvas}: A Stylized Image Dataset for Enhanced Machine Unlearning Evaluation in Diffusion Models},
  author={Zhang, Yihua and Fan, Chongyu and Zhang, Yimeng and Yao, Yuguang and Jia, Jinghan and Liu, Jiancheng and Zhang, Gaoyuan and Liu, Gaowen and Kompella, Ramana and Liu, Xiaoming and Liu, Sijia},
  booktitle={Thirty-eighth Conference on Neural Information Processing Systems Datasets and Benchmarks Track},
  year={2024}
}

@inproceedings{george2025illusion,
  title={The Illusion of Unlearning: The Unstable Nature of Machine Unlearning in Text-to-Image Diffusion Models},
  author={George, Naveen and Dasaraju, Karthik Nandan and Chittepu, Rutheesh Reddy and Mopuri, Konda Reddy},
  booktitle={Proceedings of the Computer Vision and Pattern Recognition Conference},
  pages={13393--13402},
  year={2025}
}

@inproceedings{george2026locality,
  title={Locality-Aware Continual Unlearning for Diffusion Models},
  author={George, Naveen and Murata, Naoki and Takida, Yuhta and Mopuri, Konda Reddy and Mitsufuji, Yuki},
  booktitle={European Conference on Computer Vision},
  year={2026},
  organization={Springer}
}

@inproceedings{hsu2024ring,
  title={Ring-A-Bell! How Reliable are Concept Removal Methods for Diffusion Models?},
  author={Tsai, Yu-Lin and Hsu, Chia-Yi and Xie, Chulin and Lin, Chih-Hsun and Chen, Jia You and Li, Bo and Chen, Pin-Yu and Yu, Chia-Mu and Huang, Chun-Ying},
  booktitle={12th International Conference on Learning Representations},
  year={2024}
}

@inproceedings{zhang2024generate,
  title={To generate or not? safety-driven unlearned diffusion models are still easy to generate unsafe images... for now},
  author={Zhang, Yimeng and Jia, Jinghan and Chen, Xin and Chen, Aochuan and Zhang, Yihua and Liu, Jiancheng and Ding, Ke and Liu, Sijia},
  booktitle={European Conference on Computer Vision},
  pages={385--403},
  year={2024},
  organization={Springer}
}

@inproceedings{gandikota2024unified,
  title={Unified concept editing in diffusion models},
  author={Gandikota, Rohit and Orgad, Hadas and Belinkov, Yonatan and Materzy{\'n}ska, Joanna and Bau, David},
  booktitle={Proceedings of the IEEE/CVF Winter Conference on Applications of Computer Vision},
  pages={5111--5120},
  year={2024}
}

@inproceedings{gandikota2023erasing,
  title={Erasing concepts from diffusion models},
  author={Gandikota, Rohit and Materzynska, Joanna and Fiotto-Kaufman, Jaden and Bau, David},
  booktitle={Proceedings of the IEEE/CVF international conference on computer vision},
  pages={2426--2436},
  year={2023}
}

@article{heng2023selective,
  title={Selective amnesia: A continual learning approach to forgetting in deep generative models},
  author={Heng, Alvin and Soh, Harold},
  journal={Advances in Neural Information Processing Systems},
  volume={36},
  pages={17170--17194},
  year={2023}
}

@inproceedings{kumari2023ablating,
  title={Ablating concepts in text-to-image diffusion models},
  author={Kumari, Nupur and Zhang, Bingliang and Wang, Sheng-Yu and Shechtman, Eli and Zhang, Richard and Zhu, Jun-Yan},
  booktitle={Proceedings of the IEEE/CVF International Conference on Computer Vision},
  pages={22691--22702},
  year={2023}
}

@inproceedings{fan2024salun,
  title={SalUn: Empowering Machine Unlearning via Gradient-based Weight Saliency in Both Image Classification and Generation},
  author={Chongyu Fan and Jiancheng Liu and Yihua Zhang and Eric Wong and Dennis Wei and Sijia Liu},
  booktitle={The Twelfth International Conference on Learning Representations},
  year={2024},
  url={https://openreview.net/forum?id=gn0mIhQGNM}
}

@inproceedings{lu2024mace,
  title={Mace: Mass concept erasure in diffusion models},
  author={Lu, Shilin and Wang, Zilan and Li, Leyang and Liu, Yanzhu and Kong, Adams Wai-Kin},
  booktitle={Proceedings of the IEEE/CVF Conference on Computer Vision and Pattern Recognition},
  pages={6430--6440},
  year={2024}
}

@inproceedings{huang2024receler,
  title={Receler: Reliable concept erasing of text-to-image diffusion models via lightweight erasers},
  author={Huang, Chi-Pin and Chang, Kai-Po and Tsai, Chung-Ting and Lai, Yung-Hsuan and Yang, Fu-En and Wang, Yu-Chiang Frank},
  booktitle={European Conference on Computer Vision},
  pages={360--376},
  year={2024},
  organization={Springer}
}

@inproceedings{wu2024scissorhands,
  title={Scissorhands: Scrub data influence via connection sensitivity in networks},
  author={Wu, Jing and Harandi, Mehrtash},
  booktitle={European Conference on Computer Vision},
  pages={367--384},
  year={2024},
  organization={Springer}
}

@inproceedings{amara2025erasebench,
    title={Erasing More Than Intended? How Concept Erasure Degrades the Generation of Non-Target Concepts}, 
    author={Ibtihel Amara and Ahmed Imtiaz Humayun and Ivana Kajic and Zarana Parekh and Natalie Harris and Sarah Young and Chirag Nagpal and Najoung Kim and Junfeng He and Cristina Nader Vasconcelos and Deepak Ramachandran and Golnoosh Farnadi and Katherine Heller and Mohammad Havaei and Negar Rostamzadeh},
    booktitle={Proceedings of the IEEE/CVF International Conference on Computer Vision},
    pages={16420--16430},
    year={2025},
}

@INPROCEEDINGS{ren2025sixcd,
  author={Ren, Jie and Chen, Kangrui and Cui, Yingqian and Zeng, Shenglai and Liu, Hui and Xing, Yue and Tang, Jiliang and Lyu, Lingjuan},
  booktitle={2025 IEEE/CVF Conference on Computer Vision and Pattern Recognition (CVPR)}, 
  title={Six-CD: Benchmarking Concept Removals for Text-to-image Diffusion Models}, 
  year={2025},
  pages={28769-28778}
}

@article{bai2025qwen2,
  title={Qwen2.5-{VL} Technical Report},
  author={Bai, Shuai and Chen, Keqin and Liu, Xuejing and Wang, Jialin and Ge, Wenbin and Song, Sibo and Dang, Kai and Wang, Peng and Wang, Shijie and Tang, Jun and others},
  journal={arXiv preprint arXiv:2502.13923},
  year={2025}
}

@article{ho2020denoising,
  title={Denoising diffusion probabilistic models},
  author={Ho, Jonathan and Jain, Ajay and Abbeel, Pieter},
  journal={Advances in neural information processing systems},
  volume={33},
  pages={6840--6851},
  year={2020}
}

@inproceedings{patel2025learning,
  title={Learning to unlearn while retaining: Combating gradient conflicts in machine unlearning},
  author={Patel, Gaurav and Qiu, Qiang},
  booktitle={Proceedings of the IEEE/CVF International Conference on Computer Vision},
  pages={4211--4221},
  year={2025}
}

@article{zhang2024defensive,
  title={Defensive unlearning with adversarial training for robust concept erasure in diffusion models},
  author={Zhang, Yimeng and Chen, Xin and Jia, Jinghan and Zhang, Yihua and Fan, Chongyu and Liu, Jiancheng and Hong, Mingyi and Ding, Ke and Liu, Sijia},
  journal={Advances in neural information processing systems},
  volume={37},
  pages={36748--36776},
  year={2024}
}

@inproceedings{srivatsan2025stereo,
  title={Stereo: A two-stage framework for adversarially robust concept erasing from text-to-image diffusion models},
  author={Srivatsan, Koushik and Shamshad, Fahad and Naseer, Muzammal and Patel, Vishal M and Nandakumar, Karthik},
  booktitle={Proceedings of the Computer Vision and Pattern Recognition Conference},
  pages={23765--23774},
  year={2025}
}

@inproceedings{rombach2022high,
  title={High-resolution image synthesis with latent diffusion models},
  author={Rombach, Robin and Blattmann, Andreas and Lorenz, Dominik and Esser, Patrick and Ommer, Bj{\"o}rn},
  booktitle={Proceedings of the IEEE/CVF conference on computer vision and pattern recognition},
  pages={10684--10695},
  year={2022}
}

@inproceedings{
bui2025fantastic,
title={Fantastic Targets for Concept Erasure in Diffusion Models and Where To Find Them},
author={Anh Tuan Bui and Thuy-Trang Vu and Long Tung Vuong and Trung Le and Paul Montague and Tamas Abraham and Junae Kim and Dinh Phung},
booktitle={The Thirteenth International Conference on Learning Representations},
year={2025},
url={https://openreview.net/forum?id=tZdqL5FH7w}
}

@inproceedings{lu2025concepts,
  title={When Are Concepts Erased From Diffusion Models?},
  author={Kevin Lu and Nicky Kriplani and Rohit Gandikota and Minh Pham and David Bau and Chinmay Hegde and Niv Cohen},
  booktitle={39th Conference on Neural Information Processing Systems (NeurIPS)},
  year={2025}
}

@inproceedings{
mavrothalassitis2026ascent,
title={Ascent Fails to Forget},
author={Ioannis Mavrothalassitis and Pol Puigdemont and Noam Itzhak Levi and Volkan Cevher},
booktitle={The Thirty-ninth Annual Conference on Neural Information Processing Systems},
year={2025},
url={https://openreview.net/forum?id=KBJSV1XApq}
}

@inproceedings{li2025set,
  title={Set you straight: Auto-steering denoising trajectories to sidestep unwanted concepts},
  author={Li, Leyang and Lu, Shilin and Ren, Yan and Kong, Adams Wai-Kin},
  booktitle={Proceedings of the 33rd ACM International Conference on Multimedia},
  pages={9257--9266},
  year={2025}
}

@inproceedings{
    kusumba2025eraseflow,
    title={{EraseFlow}: Learning Concept Erasure Policies via {GFlowNet}-Driven Alignment},
    author={Naga Sai Abhiram Kusumba and Maitreya Patel and Kyle Min and Changhoon Kim and Chitta Baral and Yezhou Yang},
    booktitle={The Thirty-ninth Annual Conference on Neural Information Processing Systems},
    year={2025},
    url={https://openreview.net/forum?id=igB289kbej}
}

@inproceedings{gao2025meta,
  title={Meta-unlearning on diffusion models: Preventing relearning unlearned concepts},
  author={Gao, Hongcheng and Pang, Tianyu and Du, Chao and Hu, Taihang and Deng, Zhijie and Lin, Min},
  booktitle={Proceedings of the IEEE/CVF International Conference on Computer Vision},
  pages={2131--2141},
  year={2025}
}

@article{bengio2023gflownet,
  title={{GFlowNet} Foundations},
  author={Bengio, Yoshua and Lahlou, Salem and Deleu, Tristan and Hu, Edward J and Tiwari, Mo and Bengio, Emmanuel},
  journal={Journal of Machine Learning Research},
  volume={24},
  number={210},
  pages={1--55},
  year={2023}
}

@inproceedings{liu2025nablagfn,
  title={Efficient Diversity-Preserving Diffusion Alignment via Gradient-Informed {GFlowNets}},
  author={Liu, Zhen and Xiao, Tim Z. and Liu, Weiyang and Bengio, Yoshua and Zhang, Dinghuai},
  booktitle={International Conference on Learning Representations (ICLR)},
  year={2025}
}

@inproceedings{malkin2022trajectory,
  title={Trajectory Balance: Improved Credit Assignment in {GFlowNets}},
  author={Malkin, Nikolay and Jain, Moksh and Bengio, Emmanuel and Sun, Chen and Bengio, Yoshua},
  booktitle={Advances in Neural Information Processing Systems (NeurIPS)},
  year={2022}
}

@inproceedings{park2024duo,
  title={Direct Unlearning Optimization for Robust and Safe Text-to-Image Models},
  author={Park, Yong-Hyun and Yun, Sangdoo and Kim, Jin-Hwa and Kim, Junho and Jang, Geonhui and Jeong, Yonghyun and Jo, Junghyo and Lee, Gayoung},
  booktitle={Advances in Neural Information Processing Systems (NeurIPS)},
  year={2024}
}

@article{cho2025fade,
  title={Reference-Specific Unlearning Metrics Can Hide the Truth: A Reality Check},
  author={Cho, Sungjun and Hwang, Dasol and Sala, Frederic and Hwang, Sangheum and Cho, Kyunghyun and Cha, Sungmin},
  journal={arXiv preprint arXiv:2510.12981},
  year={2025}
}

@inproceedings{song2021scorebased,
  title={Score-Based Generative Modeling through Stochastic Differential Equations},
  author={Song, Yang and Sohl-Dickstein, Jascha and Kingma, Diederik P and Kumar, Abhishek and Ermon, Stefano and Poole, Ben},
  booktitle={International Conference on Learning Representations (ICLR)},
  year={2021}
}

@inproceedings{black2024ddpo,
  title={Training Diffusion Models with Reinforcement Learning},
  author={Black, Kevin and Janner, Michael and Du, Yilun and Kostrikov, Ilya and Levine, Sergey},
  booktitle={International Conference on Learning Representations (ICLR)},
  year={2024}
}

@inproceedings{clark2024draft,
  title={Directly Fine-Tuning Diffusion Models on Differentiable Rewards},
  author={Clark, Kevin and Vicol, Paul and Swersky, Kevin and Fleet, David J},
  booktitle={International Conference on Learning Representations (ICLR)},
  year={2024}
}

@article{uehara2024finetuning,
  title={Fine-Tuning of Continuous-Time Diffusion Models as Entropy-Regularized Control},
  author={Uehara, Masatoshi and Zhao, Yulai and Black, Kevin and Hajiramezanali, Ehsan and Scalia, Gabriele and Diamant, Nathaniel Lee and Tseng, Alex M. and Biancalani, Tommaso and Levine, Sergey},
  journal={arXiv preprint arXiv:2402.15194},
  year={2024}
}

@inproceedings{hu2022lora,
  title={{LoRA}: Low-Rank Adaptation of Large Language Models},
  author={Hu, Edward J and Shen, Yelong and Wallis, Phillip and Allen-Zhu, Zeyuan and Li, Yuanzhi and Wang, Shean and Wang, Lu and Chen, Weizhu},
  booktitle={International Conference on Learning Representations (ICLR)},
  year={2022}
}

@inproceedings{radford2021clip,
  title={Learning Transferable Visual Models from Natural Language Supervision},
  author={Radford, Alec and Kim, Jong Wook and Hallacy, Chris and Ramesh, Aditya and Goh, Gabriel and Agarwal, Sandhini and Sastry, Girish and Askell, Amanda and Mishkin, Pamela and Clark, Jack and others},
  booktitle={International Conference on Machine Learning (ICML)},
  year={2021}
}

@inproceedings{heusel2017fid,
  title={{GANs} Trained by a Two Time-Scale Update Rule Converge to a Local Nash Equilibrium},
  author={Heusel, Martin and Ramsauer, Hubert and Unterthiner, Thomas and Nessler, Bernhard and Hochreiter, Sepp},
  booktitle={Advances in Neural Information Processing Systems (NeurIPS)},
  year={2017}
}

@misc{gdpr2016,
  title={Regulation ({EU}) 2016/679 of the European Parliament and of the Council of 27 April 2016},
  author={{European Parliament and Council of the European Union}},
  year={2016},
  howpublished={Official Journal of the European Union},
  note={General Data Protection Regulation, Article 17: Right to erasure},
  url={https://eur-lex.europa.eu/legal-content/EN/TXT/?uri=CELEX:32016R0679}
}

@inproceedings{
zhao2024what,
title={What makes unlearning hard and what to do about it},
author={Kairan Zhao and Meghdad Kurmanji and George-Octavian B{\u{a}}rbulescu and Eleni Triantafillou and Peter Triantafillou},
booktitle={The Thirty-eighth Annual Conference on Neural Information Processing Systems},
year={2024},
url={https://openreview.net/forum?id=QAbhLBF72K}
}

@inproceedings{wu2025erasing,
  title={Erasing undesirable influence in diffusion models},
  author={Wu, Jing and Le, Trung and Hayat, Munawar and Harandi, Mehrtash},
  booktitle={Proceedings of the Computer Vision and Pattern Recognition Conference},
  pages={28263--28273},
  year={2025}
}

\clearpage

\appendix

\raggedbottom
\renewcommand{\baselinestretch}{1.07}\selectfont
\setlength{\parindent}{0pt}
\setlength{\parskip}{0.36em plus 0.1em minus 0.05em}
\setlength{\abovedisplayskip}{0.95em plus 0.28em minus 0.18em}
\setlength{\belowdisplayskip}{0.95em plus 0.28em minus 0.18em}
\setlength{\abovedisplayshortskip}{0.7em plus 0.22em minus 0.12em}
\setlength{\belowdisplayshortskip}{0.78em plus 0.22em minus 0.12em}
\setlength{\jot}{0.35em}
\setlist[itemize]{leftmargin=1.65em,itemsep=0.35em,topsep=0.42em}
\setlist[enumerate]{leftmargin=1.75em,itemsep=0.48em,topsep=0.55em}

\phantomsection
\begin{center}
{\Large\bfseries Appendix}\\[0.45em]
\end{center}
{\normalsize\bfseries Table of Contents}
\label{app:toc}

% \vspace{0.25em}
\noindent\textcolor{appendixrule}{\rule{\linewidth}{0.4pt}}
% \vspace{0.35em}

\AppendixTocSection{app:algorithm}{Algorithm Pseudocode}

\AppendixTocSection{app:background}{Diffusion and $\nabla$-GFlowNet Background}
\AppendixTocSubsection{sec:prelim_diffusion}{Diffusion Models}
\AppendixTocSubsection{sec:prelim_gfn}{GFlowNets and Detailed Balance}
\AppendixTocSubsection{sec:prelim_nabla}{Score-Based Detailed Balance}

\AppendixTocSection{app:proofs}{Theoretical Analysis and Proofs}
\AppendixTocSubsection{app:residual_db_losses}{Residual $\nabla$-DB Losses}
\AppendixTocSubsection{app:distributional_correctness}{Distributional Correctness}
\AppendixTocSubsection{app:db_score_correspondence}{DB Score Matching Correspondence}
\AppendixTocSubsection{app:reverse_gaussian_score}{Reverse Gaussian Score}
\AppendixTocSubsection{app:forget_energy_adaptation}{Forget-Energy Adaptation}
\AppendixTocSubsection{app:optimization_caveat}{Optimization Caveat}

\AppendixTocSection{app:derivations}{Extended Derivations}
\AppendixTocSubsection{app:gibbs}{Closed-Form Conditional Gibbs Projection}
\AppendixTocSubsection{app:gaussian}{Gaussian Instantiation of $\nabla$-DB}
\AppendixTocSubsection{app:correction}{Non-Ideal Model Correction}

\AppendixTocSection{app:exp_details}{Additional Experimental Details}
\AppendixTocSubsection{app:clip_model}{CLIP Model}
\AppendixTocSubsection{app:base_model}{Base Diffusion Model and Adapters}
\AppendixTocSubsection{app:prompt_sets}{Prompt Sets and Generation Strategy}
\AppendixTocSubsection{app:optimization}{Optimization and Schedule}
\AppendixTocSubsection{app:compute}{Compute}
\AppendixTocSubsection{app:hyperparameter_sensitivity}{Hyperparameter Sensitivity}
\AppendixTocSubsection{app:eval_protocol}{Evaluation Protocol}

\AppendixTocSection{app:architecture_results}{Performance Across Multiple Diffusion Architectures}
\AppendixTocSection{app:per_concept_results}{Per-Concept Quantitative Results}

\vspace{0.35em}
\noindent\textcolor{appendixrule}{\rule{\linewidth}{0.4pt}}

\clearpage

\section{Algorithm Pseudocode}
\label{app:algorithm}

\begin{algorithm}[ht]
\caption{Forget-Energy-Tilted $\nabla$-GFlowNet Training}
\label{alg:training}
\begingroup
\renewcommand{\baselinestretch}{1.08}\selectfont
\begin{algorithmic}[1]
\setlength{\itemsep}{0.22em}
\REQUIRE Pretrained denoiser $\beps^{\rm pre}$; CLIP model; concept $\mathcal{C}$; descriptors $\mathcal{Q}_{\mathcal{C}}$; forget prompts $\mathcal{Y}_{\mathcal{C}}$; hyperparameters $\tau,\alpha,\lambda_{\text{scale}}$
\STATE \textbf{Setup.} Initialize LoRA parameters $\Delta\theta$ on U-Net attention layers $(W_Q,W_K,W_V,W_O)$ with rank $r=8$.
\STATE Initialize the residual parameterization network $g_\phi$ with channels $\{64,128,256,256\}$.
\STATE Initialize the adapted denoiser $\beps_\theta \leftarrow \beps^{\rm pre}+\Delta\theta$.
\STATE \vspace{0.2em}
\FOR{epoch $= 1, \ldots, N_{\text{epochs}}$}
    \FOR{batch $= 1, \ldots, N_{\text{batches}}$}
        \STATE Sample forget prompts $\{y_i\}$ from $\mathcal{Y}_{\mathcal{C}}$
        \STATE \vspace{0.15em}
        \STATE \textbf{Phase 1: sample prompt-conditioned trajectories.}
        \STATE Sample $\bz_T \sim \mathcal{N}(\mathbf{0}, \mathbf{I})$
        \FOR{$t = T, T-1, \ldots, 1$}
            \STATE $\beps_\theta(\bz_t, t, y_i) \gets$ LoRA-adapted U-Net prediction
            \STATE $\bz_{t-1} \sim P_F^\theta(\cdot \mid \bz_t,y_i)$ via one DDPM denoising step
        \ENDFOR
        \STATE Decode terminal latent: $\hat{\bx} \gets \mathcal{D}(\bz_0)$
        \STATE \vspace{0.15em}
        \STATE \textbf{Phase 2: evaluate the terminal forget energy.}
        \STATE $s_{\mathcal{C}}(\hat{\bx}) \gets |\mathcal{Q}_{\mathcal{C}}|^{-1}\sum_{q\in\mathcal{Q}_{\mathcal{C}}}\operatorname{sim}_{\mathrm{CLIP}}(\hat{\bx},q)$
        \STATE $m_{\mathcal{C}}(\hat{\bx}) \gets \max(s_{\mathcal{C}}(\hat{\bx})-\tau,0)$
        \STATE $\widetilde{E}_{\mathcal{C}}(\hat{\bx}) \gets \lambda_{\text{scale}}\big(\exp(\alpha\,m_{\mathcal{C}}(\hat{\bx}))-1\big)$
        \STATE \vspace{0.15em}
        \STATE \textbf{Phase 3: match residual $\nabla$-DB scores.}
        \STATE Sample a timestep subset $\mathcal{T}\subset\{1,\ldots,T\}$ with $|\mathcal{T}|=0.1T$
        \FOR{$t \in \mathcal{T}$}
            \STATE $\mathbf{s}_{\mathrm{fwd}} \gets \big(\mu_\theta(\bz_t,t,y_i)-\mu^{\rm pre}(\bz_t,t,y_i)\big)/\sigma_t^2$
            \STATE $\mathbf{s}_{\mathrm{rev}} \gets \nabla_{\bz_t}\log P_F^{\rm pre}(\bz_{t-1}\mid\bz_t,y_i) - \nabla_{\bz_t}\log P_F^\theta(\bz_{t-1}\mid\bz_t,y_i)$
            \STATE $\mathbf{h}_{t-1} \gets -\bar\alpha_{t-1}\nabla_{\bz_{t-1}}\widetilde{E}_{\mathcal{C}}(\hat{\bx}_\theta(\bz_{t-1},t-1,y_i)) + g_\phi(\bz_{t-1},t-1)$
            \STATE $\mathbf{h}_{t} \gets -\bar\alpha_t\nabla_{\bz_t}\widetilde{E}_{\mathcal{C}}(\hat{\bx}_\theta(\bz_t,t,y_i)) + g_\phi(\bz_t,t)$
            \STATE Accumulate $\mathcal{L}_{\text{fwd}}$ from $\|\mathbf{s}_{\mathrm{fwd}}-\mathbf{h}_{t-1}\|^2$ and $\mathcal{L}_{\text{rev}}$ from $\|\mathbf{s}_{\mathrm{rev}}-\mathbf{h}_{t}\|^2$
        \ENDFOR
        \STATE \vspace{0.15em}
        \STATE \textbf{Phase 4: update parameters.}
        \STATE $\mathcal{L}_{\text{term}} \gets \|g_\phi(\bz_0,0)\|^2$ \COMMENT{data-end boundary}
        \STATE $\mathcal{L} \gets \mathcal{L}_{\text{fwd}} + \lambda_{\text{rev}} \mathcal{L}_{\text{rev}} + \lambda_{\text{term}} \mathcal{L}_{\text{term}}$
        \STATE Update $\Delta\theta$ and $\phi$ with AdamW
        \STATE \vspace{0.2em}
    \ENDFOR
\ENDFOR
\RETURN LoRA-adapted model $\beps^{\rm pre} + \Delta\theta$
\end{algorithmic}
\endgroup
\end{algorithm}

\clearpage

\section{Background on Diffusion Models and \texorpdfstring{$\nabla$}{nabla}-GFlowNets}
\label{app:background}

This section reviews the two technical foundations underlying our method: (i) diffusion models, which provide the generative model and the score-based view of denoising transitions; and (ii) GFlowNets and their continuous score-matching variant $\nabla$-GFlowNets, which provide the optimization framework for tilting the diffusion sampler toward an unnormalized target. The presentation is intended to be self-contained for a reader familiar with general deep generative modeling but not necessarily fluent in both subfields, and to fix the notation used by the proofs in Appendix~\ref{app:proofs} and Appendix~\ref{app:derivations}.

\subsection{Diffusion Models}
\label{sec:prelim_diffusion}

Denoising diffusion probabilistic models (DDPMs)~\citep{ho2020denoising,song2021scorebased} learn a generative model by reversing a fixed corruption process. The corruption process gradually adds Gaussian noise to data until samples are indistinguishable from $\mathcal{N}(\mathbf{0},\mathbf{I})$; the generative model is then a neural network trained to invert this corruption one step at a time. Sampling proceeds by drawing pure noise and iteratively denoising back to a clean datum.

\paragraph{Forward (noising) process.}
Let $\bz_0$ be a clean latent. The forward process is a fixed Markov chain governed by a variance schedule $\{\beta_t\}_{t=1}^T$, with $\alpha_t = 1-\beta_t$ and cumulative product $\bar\alpha_t=\prod_{s=1}^t\alpha_s$. Each per-step kernel is Gaussian:
\begin{align}
q(\bz_t \mid \bz_{t-1}) &= \mathcal{N}\!\left(\sqrt{\alpha_t}\,\bz_{t-1},\; (1-\alpha_t)\mathbf{I}\right).
\end{align}
A standard property of this chain is that $\bz_t$ given $\bz_0$ is itself Gaussian:
\begin{align}
\label{eq:forward_marginal}
q(\bz_t \mid \bz_0) &= \mathcal{N}\!\left(\sqrt{\bar\alpha_t}\,\bz_0,\;(1-\bar\alpha_t)\mathbf{I}\right),
\end{align}
which lets us directly sample any noisy intermediate without rolling out the chain. The schedule is chosen so that $\bar\alpha_T \approx 0$, making $q(\bz_T)$ essentially indistinguishable from a standard Gaussian.

\paragraph{Reverse (denoising) process.}
Generation requires sampling from $q(\bz_{t-1} \mid \bz_t)$, which is intractable in closed form. The model parameterizes a Gaussian approximation
\begin{align}
p_\theta(\bz_{t-1} \mid \bz_t,y) &= \mathcal{N}\!\left(\mu_\theta(\bz_t,t,y),\;\sigma_t^2\mathbf{I}\right),
\end{align}
where $y$ is an optional conditioning input (e.g., a text prompt) and $\sigma_t^2$ is fixed by the schedule. Using the standard $\bz_0$-prediction reparameterization, the predicted mean is rewritten in terms of a learned noise predictor $\beps_\theta$:
\begin{align}
\label{eq:mean_reparam}
\mu_\theta(\bz_t,t,y) &= \frac{1}{\sqrt{\alpha_t}}\!\left(\bz_t - \frac{1-\alpha_t}{\sqrt{1-\bar\alpha_t}}\beps_\theta(\bz_t,t,y)\right).
\end{align}
Throughout the main text, $P_F^\theta(\bz_{t-1}\mid\bz_t,y)$ denotes $p_\theta(\bz_{t-1}\mid\bz_t,y)$ in GFlowNet notation; the subscript $F$ refers to the (forward) sampling direction of the GFlowNet, not to the noising direction.

\paragraph{Training objective.}
The denoiser is trained by minimizing the simplified denoising score-matching loss
\begin{align}
\mathcal{L}_{\text{DDPM}}
&=
\E_{\bz_0,\beps,t}\!\left[
\left\|\beps - \beps_\theta\!\left(\sqrt{\bar\alpha_t}\bz_0+\sqrt{1-\bar\alpha_t}\beps,\, t,\, y\right)\right\|^2
\right],
\end{align}
which is, up to per-timestep weighting, equivalent to maximizing a variational lower bound on $\log p_\theta(\bz_0)$ and to a weighted denoising score-matching objective. Our method does \emph{not} re-train this objective; we keep the pretrained $\beps^{\rm pre}$ frozen and learn only a low-rank correction $\Delta\theta$ on top.

\paragraph{Score-based view.}
The crucial connection used by our method is between the noise predictor $\beps_\theta$ and the score function of the marginal $p_t(\bz_t\mid y)$:
\begin{align}
\label{eq:score_noise}
\nabla_{\bz_t} \log p_t(\bz_t \mid y) &= -\frac{\beps_\theta(\bz_t, t, y)}{\sqrt{1 - \bar\alpha_t}}.
\end{align}
That is, the noise predictor and the marginal score differ only by a deterministic, time-dependent rescaling. As a consequence, a diffusion U-Net is, ``for free,'' a score model. Any modification we wish to apply to the sampler can be expressed as a modification to the per-step score, which is exactly the form on which $\nabla$-GFlowNet operates.

\paragraph{One-step clean-image estimate.}
At any timestep $t$, the noise predictor implies a clean-image estimate via Eq.~\eqref{eq:forward_marginal}:
\begin{align}
\hat{\bx}_\theta(\bz_t,t,y)
&=
\mathcal{D}\!\left(\frac{\bz_t-\sqrt{1-\bar\alpha_t}\,\beps_\theta(\bz_t,t,y)}{\sqrt{\bar\alpha_t}}\right).
\end{align}
Here $\mathcal{D}$ is the latent decoder; in pixel diffusion, $\mathcal{D}$ is the identity. This $\hat{\bx}_\theta$ is what allows us to evaluate the terminal forget energy $\widetilde{E}_{\mathcal{C}}$ from a noisy intermediate state, and it is the object differentiated in the forward-looking flow score of Eq.~\eqref{eq:forward_looking}.

\paragraph{Latent diffusion models.}
Stable Diffusion-style latent diffusion models (LDMs)~\citep{rombach2022high} run the diffusion process in the latent space of a pretrained encoder--decoder pair $(\mathcal{E},\mathcal{D})$ rather than at pixel resolution. Latents $\bz_t$ are spatially much smaller than the corresponding image, and decoded images are obtained as $\bx = \mathcal{D}(\bz_0)$. Conditioning on text prompts $y$ enters through cross-attention layers in the U-Net. Our experiments use Stable Diffusion v1.5, and all derivations refer to latents $\bz_t$ unless otherwise stated.

\paragraph{Why this matters for unlearning.}
Two properties of diffusion models make them a particularly clean substrate for our method:
\begin{itemize}[leftmargin=1.6em,itemsep=0.35em,topsep=0.45em]
\item \textbf{Tractable per-step Gaussians.} Both the pretrained transition $P_F^{\rm pre}$ and the LoRA-adapted transition $P_F^\theta$ are Gaussians with the same variance, so all log-density gradients used in the residual $\nabla$-DB losses are available in closed form (Appendix~\ref{app:gaussian}).
\item \textbf{Score representation by construction.} Any per-step modification to the sampler can be expressed entirely through modifications to the noise predictor, which is also a score model via Eq.~\eqref{eq:score_noise}.
\end{itemize}

\subsection{GFlowNets and the Detailed Balance Condition}
\label{sec:prelim_gfn}

\paragraph{What problem do GFlowNets solve?}
Generative Flow Networks (GFlowNets)~\citep{bengio2023gflownet} are amortized samplers for unnormalized reward distributions. Given a non-negative reward $R(\bx)\ge 0$, the goal is to learn a policy whose terminal-state marginal satisfies
\begin{align}
P_F^\top(\bx) &\propto R(\bx),
\end{align}
where $P_F^\top$ denotes the marginal at the end of the sampling chain. The defining feature is that this proportionality is achieved without ever computing the partition function $Z=\int R(\bx)\,d\bx$. Once trained, sampling requires a single forward pass through the policy, in contrast to MCMC methods that require per-sample iteration.

\paragraph{Trajectory and flow view.}
A GFlowNet builds a terminal sample through a sequence of intermediate states $s_0\to s_1\to\cdots\to s_T = \bx$, governed by a forward (sampling) policy $P_F(s_{t+1}\mid s_t)$ and a backward policy $P_B(s_t\mid s_{t+1})$. The flow function $F(s_t)\ge 0$ assigns a non-negative weight to each state and plays the role of an unnormalized marginal: intuitively, $F(s_t)$ measures the total reward reachable from $s_t$. In the diffusion setting, the trajectory is the denoising chain $\bz_T\to\cdots\to\bz_0$, $P_F$ is the denoising kernel, and $P_B$ is the (fixed) DDPM noising kernel.

\paragraph{Detailed Balance.}
The \emph{Detailed Balance} (DB) objective is the most local of the standard GFlowNet objectives, in that it constrains a single transition rather than a full trajectory:
\begin{align}
\label{eq:db_appendix}
P_F(s_{t+1}\mid s_t)\,F(s_t) &= P_B(s_t\mid s_{t+1})\,F(s_{t+1}),
\qquad
F(s_T) = R(s_T).
\end{align}
The terminal boundary condition $F(s_T)=R(s_T)$ ties the flow to the reward at the data end of the chain. Whenever DB holds at every transition together with this boundary condition, a telescoping argument cancels the intermediate $F$-terms along any trajectory, leaving only the boundary contributions, and yields the desired proportional-sampling property $P_F^\top(\bx)\propto R(\bx)$.

\paragraph{Why does the partition function cancel?}
DB only requires \emph{ratios} of policy densities and flows. The unknown partition function $Z$ enters $F$ only as an overall multiplicative constant, which appears in both $F(s_t)$ and $F(s_{t+1})$ and therefore cancels in any pairwise ratio. This is why GFlowNets can sample from an unnormalized target without ever evaluating $Z$ -- exactly the property our unlearning target $p_{\rm pre}(\bx\mid y)\exp(-\widetilde{E}_{\mathcal{C}}(\bx))$ requires, since the prompt-dependent partition function is intractable.

\paragraph{How GFlowNets relate to MCMC and standard generative modeling.}
GFlowNets occupy a middle ground: like MCMC, they target an unnormalized density; like standard generative models, they amortize sampling into a single forward pass at inference time. This combination is precisely what concept unlearning at inference-time requires: a tilted diffusion sampler that runs at the same per-sample cost as the original.

\subsection{\texorpdfstring{$\nabla$}{nabla}-GFlowNet: Score-Based Detailed Balance}
\label{sec:prelim_nabla}

\paragraph{Why standard DB is hard for diffusion.}
Direct enforcement of Eq.~\eqref{eq:db_appendix} requires evaluating $\log P_F$ and $\log P_B$ pointwise. For continuous, high-dimensional state spaces such as diffusion latents, log-densities of learned transitions can be expensive or unstable to evaluate, especially when $P_F$ is parameterized by a large neural network. This makes pure DB a poor match for diffusion-based samplers.

\paragraph{The $\nabla$ trick.}
\citet{liu2025nablagfn} observe that one can sidestep log-densities by differentiating the DB condition with respect to the state, converting a log-probability balance into a \emph{score} balance. Taking $\log$ of Eq.~\eqref{eq:db_appendix} and then $\nabla_{s_{t+1}}$, and using that $F(s_t)$ does not depend on $s_{t+1}$, gives the $\nabla$-DB equation:
\begin{align}
\label{eq:nabla_db}
\nabla_{s_{t+1}}\log P_F(s_{t+1}\mid s_t)
&=
\nabla_{s_{t+1}}\log P_B(s_t\mid s_{t+1})
+
\nabla_{s_{t+1}}\log F(s_{t+1}).
\end{align}
The same condition can be obtained by differentiating with respect to $s_t$, yielding a complementary reverse-score equation; see Appendix~\ref{app:residual_db_losses}. Crucially, all three terms are scores rather than log-densities, so the equation can be enforced as an $\ell_2$-style score-matching loss using exactly the gradient information that diffusion models already expose through Eq.~\eqref{eq:score_noise}.

\paragraph{Residual variant.}
The full DB condition asks for a flow $F$ that is consistent with the absolute target reward. For unlearning, however, we are not changing the target distribution from scratch; we are tilting an already-trained diffusion model. It is therefore natural to factor out the pretrained sampler and learn only the residual ratio
\begin{align}
\tilde{F}(\bz_t,y) &= \frac{F(\bz_t,y)}{F^{\rm pre}(\bz_t,y)},
\qquad
\tilde{P}_F^\theta(\bz_{t-1}\mid\bz_t,y) = \frac{P_F^\theta(\bz_{t-1}\mid\bz_t,y)}{P_F^{\rm pre}(\bz_{t-1}\mid\bz_t,y)},
\end{align}
which is exactly the residual $\nabla$-DB condition stated as Eq.~\eqref{eq:residual_ratio_balance} in the main text. Subtracting the pretrained instance of Eq.~\eqref{eq:nabla_db} from the finetuned instance cancels the (fixed) noising kernel $P_B$ and yields a clean residual score equation; the formal derivation is in Appendix~\ref{app:residual_db_losses}.

\paragraph{Why this is the right tool for our problem.}
Three properties line up with concept unlearning:
\begin{itemize}[leftmargin=1.6em,itemsep=0.35em,topsep=0.45em]
\item \textbf{Native to diffusion.} The optimization works in score space, which is the natural representation of a diffusion U-Net.
\item \textbf{Unnormalized targets.} The flow function absorbs the unknown partition function, so the unnormalized energy $\widetilde{E}_{\mathcal{C}}$ is sufficient.
\item \textbf{Per-transition local objective.} Each timestep contributes its own loss, so optimization decomposes over the chain and is compatible with mini-batch training.
\end{itemize}
Together, these motivate our use of residual $\nabla$-GFlowNet as the optimization backbone (Section~\ref{sec:optimization}); the unlearning-specific choices -- the prompt-conditioned tilted target, the thresholded forget energy, and the forward-looking flow parameterization -- are layered on top of this backbone.

\clearpage

\section{Theoretical Analysis and Proofs}
\label{app:proofs}

\subsection{From Detailed Balance to the Trainable Residual \texorpdfstring{$\nabla$}{nabla}-DB Losses}
\label{app:residual_db_losses}

For a fixed prompt $y$, let $P_B(\bz_t | \bz_{t-1})$ denote the fixed DDPM noising transition, while $P_F$ denotes the denoising transition in the GFlowNet sampling direction.
The DDPM denoising transition $\bz_t \to \bz_{t-1}$ satisfies the detailed-balance relation
\begin{align}
\label{eq:db_ddpm}
P_F(\bz_{t-1} | \bz_t,y)\,F(\bz_t,y) &= P_B(\bz_t | \bz_{t-1})\,F(\bz_{t-1},y).
\end{align}
Taking logarithms gives
\begin{align}
\log P_F(\bz_{t-1} | \bz_t,y) + \log F(\bz_t,y)
&=
\log P_B(\bz_t | \bz_{t-1}) + \log F(\bz_{t-1},y).
\end{align}
Differentiating with respect to the next denoised state $\bz_{t-1}$ yields the forward $\nabla$-DB equation
\begin{align}
\nabla_{\bz_{t-1}} \log P_F(\bz_{t-1} | \bz_t,y)
&=
\nabla_{\bz_{t-1}} \log P_B(\bz_t | \bz_{t-1})
+
\nabla_{\bz_{t-1}} \log F(\bz_{t-1},y),
\end{align}
because $F(\bz_t,y)$ does not depend on $\bz_{t-1}$.
Differentiating instead with respect to the conditioning latent $\bz_t$ yields the reverse equation
\begin{align}
\nabla_{\bz_t} \log P_B(\bz_t | \bz_{t-1})
-
\nabla_{\bz_t} \log P_F(\bz_{t-1} | \bz_t,y)
&=
\nabla_{\bz_t} \log F(\bz_t,y),
\end{align}
because $F(\bz_{t-1},y)$ does not depend on $\bz_t$.

Now let $P_F^{\rm pre}$ be the pretrained denoising policy and $P_F^\theta$ the finetuned one, both conditioned on the same prompt $y$.
Subtracting the pretrained and finetuned forward equations cancels the fixed noising policy $P_B$ and gives
\begin{align}
\nabla_{\bz_{t-1}} \log P_F^\theta(\bz_{t-1} | \bz_t,y)
-
\nabla_{\bz_{t-1}} \log P_F^{\rm pre}(\bz_{t-1} | \bz_t,y)
&=
\nabla_{\bz_{t-1}} \log \tilde{F}(\bz_{t-1},y),
\end{align}
with $\tilde{F}=F/F^{\rm pre}$.
Subtracting the reverse equations gives
\begin{align}
\nabla_{\bz_t} \log P_F^{\rm pre}(\bz_{t-1} | \bz_t,y)
-
\nabla_{\bz_t} \log P_F^\theta(\bz_{t-1} | \bz_t,y)
&=
\nabla_{\bz_t} \log \tilde{F}(\bz_t,y).
\end{align}
These two equations are exactly the targets enforced by $\mathcal{L}_{\text{fwd}}$ and $\mathcal{L}_{\text{rev}}$.

With the forward-looking parameterization from Eq.~\eqref{eq:forward_looking}, the same residual flow network is evaluated at the state appearing on the right-hand side:
\begin{itemize}
    \item forward direction: evaluate $\nabla \log \tilde{F}$ at $(\bz_{t-1},y)$;
    \item reverse direction: evaluate $\nabla \log \tilde{F}$ at $(\bz_t,y)$.
\end{itemize}
Therefore the practical losses are
\begin{align}
\label{eq:loss_forward}
\mathcal{L}_{\text{fwd}}
&=
\E\!\left[
\left\|
\nabla_{\bz_{t-1}} \log P_F^\theta(\bz_{t-1} | \bz_t,y)
-
\nabla_{\bz_{t-1}} \log P_F^{\rm pre}(\bz_{t-1} | \bz_t,y)
-
\nabla_{\bz_{t-1}} \log \tilde{F}(\bz_{t-1},y)
\right\|^2
\right], \\
\label{eq:loss_reverse}
\mathcal{L}_{\text{rev}}
&=
\E\!\left[
\left\|
\nabla_{\bz_t} \log P_F^{\rm pre}(\bz_{t-1} | \bz_t,y)
-
\nabla_{\bz_t} \log P_F^\theta(\bz_{t-1} | \bz_t,y)
-
\nabla_{\bz_t} \log \tilde{F}(\bz_t,y)
\right\|^2
\right].
\end{align}
The terminal penalty $\mathcal{L}_{\text{term}}=\E_{\bz_0,y}[\|g_\phi(\bz_0,0)\|^2]$ enforces the data-end boundary condition.

\paragraph{How the gradients are computed in practice.}
The forward residual score has a closed form for Gaussian transitions, which yields Eq.~\eqref{eq:loss_forward}.
The reverse score differentiates the transition log-density with respect to the conditioning variable, so it depends on the Jacobian of the denoiser mean.
For
\[
P_F^\theta(\bz_{t-1} | \bz_t,y) = \mathcal{N}(\mu_\theta(\bz_t,t,y), \sigma_t^2 \mathbf I),
\]
we have
\begin{align}
\nabla_{\bz_t}\log P_F^\theta(\bz_{t-1} | \bz_t,y)
&=
\frac{1}{\sigma_t^2}J_{\mu_\theta}(\bz_t,t,y)^\top(\bz_{t-1} - \mu_\theta(\bz_t,t,y)).
\end{align}
The pretrained term is obtained by replacing $\mu_\theta$ with $\mu^{\rm pre}$.
These Jacobian-vector products are computed with standard autodiff and do not require forming a dense Jacobian matrix.

\subsection{Distributional Correctness}
\label{app:distributional_correctness}

The telescoping argument must be stated in integrated ratio form.
A score identity of the form $\nabla \log \tilde{P}_F^\theta = \nabla \log \tilde{F}$ only determines $\log \tilde{P}_F^\theta$ up to an additive function of the conditioning state, so it is not by itself sufficient to justify telescoping across time.
We therefore state the distributional theorem directly for the residual transition ratio.
Eq.~\eqref{eq:residual_ratio_db} is the multiplicative residual analogue of the detailed-balance factorization in \citet{bengio2023gflownet} and the integrated version of the residual result stated by \citet[Proposition~4]{liu2025nablagfn}.

\begin{theorem}[Distributional Correctness]
\label{thm:distributional}
For every fixed prompt $y$, let $P_F^{\rm pre}$ denote the pretrained denoising policy, let $P_F^\theta$ denote the finetuned denoising policy, and let $E(\bx)\ge 0$ be the terminal forget energy used for unlearning.
Define the residual transition ratio and residual flow by
\begin{align}
\tilde{P}_F^\theta(\bz_{t-1} \mid \bz_t,y) &\coloneqq \frac{P_F^\theta(\bz_{t-1} \mid \bz_t,y)}{P_F^{\rm pre}(\bz_{t-1} \mid \bz_t,y)},
\qquad
\tilde{F}(\bz_t,y) \coloneqq \frac{F(\bz_t,y)}{F^{\rm pre}(\bz_t,y)}.
\end{align}
Assume that for every denoising step $t=1,\dots,T$,
\begin{align}
\label{eq:residual_ratio_db}
\tilde{P}_F^\theta(\bz_{t-1} \mid \bz_t,y) &= \frac{\tilde{F}(\bz_{t-1},y)}{\tilde{F}(\bz_t,y)},
\end{align}
that the pretrained and finetuned models share the same initial noise prior $\bz_T\sim\mathcal{N}(\mathbf{0},\mathbf{I})$, and that at the data end $\bz_0$ the residual terminal flow satisfies
\begin{align}
\tilde{F}(\bz_0,y) &\propto \exp\!\big(-E(\bx)\big), \qquad \bx = \mathcal{D}(\bz_0),
\end{align}
while the initial residual flow is constant in $\bz_T$ on the support of the shared noise prior, i.e., there exists $c_{\mathrm{init}}(y) > 0$ such that
\begin{align}
\tilde{F}(\bz_T,y) &= c_{\mathrm{init}}(y).
\end{align}
Then:
\begin{align}
p_\theta(\bx \mid y) &\propto p_{\mathrm{pre}}(\bx \mid y) \cdot \exp\!\big(-E(\bx)\big).
\end{align}
\end{theorem}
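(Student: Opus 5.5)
The plan is a telescoping argument on the joint law of the whole denoising chain, followed by marginalization onto the terminal latent and then a push-forward through the decoder. Fix a prompt $y$. The pretrained and finetuned samplers are Markov chains with the same initial law $\mathcal{N}(\mathbf{0},\mathbf{I})$ for $\bz_T$. Their path densities factorize as
\[
p_\theta(\bz_{0:T}\mid y)=\mathcal{N}(\bz_T;\mathbf{0},\mathbf{I})\prod_{t=1}^T P_F^\theta(\bz_{t-1}\mid\bz_t,y),
\]
and the same holds with $\mathrm{pre}$ in place of $\theta$. First I will check that the ratio of the two path densities is well defined. Both transitions are Gaussians with full support, so $P_F^{\rm pre}>0$ everywhere and $\tilde{P}_F^\theta$ is a genuine density ratio. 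The assumption \eqref{eq:residual_ratio_db} then forces $\tilde F(\cdot,y)>0$ along every path. I will record this positivity at the start, since it is needed for the ratios below.

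Second, I will divide the two path densities. The prior factors cancel because the noise prior is shared, and the ratio becomes a product of per-step residual ratios. Substituting \eqref{eq:residual_ratio_db} at each step $t=1,\dots,T$ gives a telescoping product:
\[
\frac{p_\theta(\bz_{0:T}\mid y)}{p_{\rm pre}(\bz_{0:T}\mid y)}=\prod_{t=1}^T\frac{\tilde F(\bz_{t-1},y)}{\tilde F(\bz_t,y)}=\frac{\tilde F(\bz_0,y)}{\tilde F(\bz_T,y)}=\frac{\tilde F(\bz_0,y)}{c_{\mathrm{init}}(y)}.
\]
This is exactly why the theorem is stated in integrated ratio form rather than score form. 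A score identity would leave an additive function of $\bz_t$ at each step, and that leftover would spoil the cancellation.

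Third, I will integrate out $\bz_{1:T}$. The right-hand side depends only on $\bz_0$, so
\[
p_\theta(\bz_0\mid y)=\frac{\tilde F(\bz_0,y)}{c_{\mathrm{init}}(y)}\,p_{\rm pre}(\bz_0\mid y).
\]
Using the boundary condition $\tilde F(\bz_0,y)=\kappa(y)\exp(-E(\mathcal{D}(\bz_0)))$, this becomes $p_\theta(\bz_0\mid y)\propto p_{\rm pre}(\bz_0\mid y)\exp(-E(\mathcal{D}(\bz_0)))$, where the constant $\kappa(y)/c_{\mathrm{init}}(y)$ depends only on $y$.

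I also want to confirm that this constant is consistent with normalization. Since $p_\theta$ integrates to one, the constant must equal $1/Z(y)$ with $Z(y)=\E_{p_{\rm pre}}[e^{-E}]$. Because $E\ge0$, we have $Z(y)\le1$, and $Z(y)>0$ as long as $E$ is finite on a set of positive pretrained mass. So the hypotheses are mutually consistent and the proportionality constant is just the partition function.

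Finally, I will push the latent statement forward to images, which is the step I expect to need the most care. The tilt factor $\exp(-E(\mathcal{D}(\bz_0)))$ depends on $\bz_0$ only through $\bx=\mathcal{D}(\bz_0)$, so for any measurable set $A$ of images,
\[
P_\theta(\bx\in A)=\frac{1}{Z(y)}\int_{\mathcal{D}^{-1}(A)}e^{-E(\mathcal{D}(\bz_0))}\,p_{\rm pre}(d\bz_0\mid y)=\frac{1}{Z(y)}\int_A e^{-E(\bx)}\,P_{\rm pre}(d\bx\mid y).
\]
The change of variables is valid even if $\mathcal{D}$ is not injective. The main obstacle is therefore measure-theoretic rather than algebraic: the image-space laws may lack Lebesgue densities, so I will state the conclusion as equality of the Radon--Nikodym derivative $dP_\theta/dP_{\rm pre}=e^{-E}/Z(y)$. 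When densities do exist, this is the stated proportionality $p_\theta(\bx\mid y)\propto p_{\rm pre}(\bx\mid y)\exp(-E(\bx))$.
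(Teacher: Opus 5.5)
Your proof is correct and follows the paper's argument: cancel the shared noise prior, telescope the residual ratios to $\tilde F(\bz_0,y)/c_{\mathrm{init}}(y)$, and marginalize, since the ratio depends on the trajectory only through the terminal sample. Your extra steps make explicit what the paper compresses into ``marginalizing over intermediate latents preserves the same proportionality.'' These are the positivity check, identifying the constant as $1/Z(y)$, and phrasing the decoder step as a push-forward with $dP_\theta/dP_{\rm pre}=e^{-E}/Z(y)$. One caveat: the normalization check shows only that the constants must agree, not that the hypotheses are jointly satisfiable as you claim.
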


\begin{proof}
Because the pretrained and finetuned models share the same initial noise prior, the ratio of path probabilities along a denoising trajectory $\bz_T \to \bz_{T-1} \to \cdots \to \bz_0$ is
\begin{align}
\frac{p_\theta(\bz_{0:T}\mid y)}{p_{\mathrm{pre}}(\bz_{0:T}\mid y)}
&=
\prod_{t=1}^{T}
\frac{P_F^\theta(\bz_{t-1} \mid \bz_t,y)}{P_F^{\rm pre}(\bz_{t-1} \mid \bz_t,y)}
=
\prod_{t=1}^{T}
\tilde{P}_F^\theta(\bz_{t-1} \mid \bz_t,y).
\end{align}
Applying Eq.~\eqref{eq:residual_ratio_db} gives
\begin{align}
\frac{p_\theta(\bz_{0:T}\mid y)}{p_{\mathrm{pre}}(\bz_{0:T}\mid y)}
&=
\prod_{t=1}^{T}
\frac{\tilde{F}(\bz_{t-1},y)}{\tilde{F}(\bz_t,y)}
=
\frac{\tilde{F}(\bz_0,y)}{\tilde{F}(\bz_T,y)}
\propto
\exp\!\big(-E(\bx)\big),
\end{align}
where the last proportionality uses the assumptions that $\tilde{F}(\bz_0,y) \propto \exp(-E(\bx))$ and $\tilde{F}(\bz_T,y)=c_{\mathrm{init}}(y)$, with $c_{\mathrm{init}}(y)$ independent of $\bz_T$.
The right-hand side depends on the trajectory only through the terminal clean sample $\bx$, so marginalizing over intermediate latents preserves the same proportionality for the image distribution:
\begin{align}
\frac{p_\theta(\bx\mid y)}{p_{\mathrm{pre}}(\bx\mid y)}
&\propto
\exp\!\big(-E(\bx)\big).
\end{align}
Rearranging gives the claim.
\end{proof}

\begin{remark}[What score matching alone guarantees]
Zero score loss enforces equality of gradients of log transition ratios.
This only determines $\log \tilde{P}_F^\theta(\bz_{t-1} \mid \bz_t,y)$ up to an additive function of $(\bz_t,y)$, so the theorem above is intentionally stated in the stronger integrated ratio form rather than inferred directly from gradient equality.
\end{remark}

\subsection{DB Score Matching Correspondence}
\label{app:db_score_correspondence}

\begin{proposition}
\label{prop:db_score}
For Gaussian denoising transitions
\[
P_F^\theta(\bz_{t-1} \mid \bz_t,y) = \mathcal{N}(\mu_\theta(\bz_t, t,y), \sigma_t^2 \mathbf{I}),
\qquad
P_F^{\rm pre}(\bz_{t-1} \mid \bz_t,y) = \mathcal{N}(\mu^{\rm pre}(\bz_t, t,y), \sigma_t^2 \mathbf{I}),
\]
the residual forward score is
\begin{align}
\nabla_{\bz_{t-1}} \log \tilde{P}_F^\theta(\bz_{t-1} \mid \bz_t,y)
&=
\nabla_{\bz_{t-1}} \log P_F^\theta(\bz_{t-1} \mid \bz_t,y)
-
\nabla_{\bz_{t-1}} \log P_F^{\rm pre}(\bz_{t-1} \mid \bz_t,y) \\
&=
\frac{\mu_\theta(\bz_t, t,y) - \mu^{\rm pre}(\bz_t, t,y)}{\sigma_t^2}.
\end{align}
\end{proposition}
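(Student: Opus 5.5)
The plan is a direct computation in three steps: expand the log of the residual ratio into the two Gaussian log-densities, differentiate each in $\bz_{t-1}$, and subtract. The first equality in Proposition~\ref{prop:db_score} is definitional. By the definition of $\tilde{P}_F^\theta$ in Theorem~\ref{thm:distributional},
\[
\log \tilde{P}_F^\theta(\bz_{t-1}\mid\bz_t,y)=\log P_F^\theta(\bz_{t-1}\mid\bz_t,y)-\log P_F^{\rm pre}(\bz_{t-1}\mid\bz_t,y).
\]
Both densities are strictly positive Gaussians, so the ratio is well defined and smooth, and linearity of the gradient gives the first line.

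For the second equality I will write each log-density explicitly. For $\mathcal{N}(\mu,\sigma_t^2\mathbf{I})$ in dimension $d$,
\[
\log \mathcal{N}(\bz_{t-1};\mu,\sigma_t^2\mathbf{I})=-\frac{d}{2}\log(2\pi\sigma_t^2)-\frac{\|\bz_{t-1}-\mu\|^2}{2\sigma_t^2}.
\]
Its gradient in $\bz_{t-1}$ is $-(\bz_{t-1}-\mu)/\sigma_t^2$. The normalizing constant drops out because it does not depend on $\bz_{t-1}$. The key point is that both transitions share the same variance $\sigma_t^2$, which is fixed by the schedule and not learned. Subtracting the pretrained score from the finetuned score therefore cancels the $\bz_{t-1}/\sigma_t^2$ terms exactly, leaving
\[
\frac{\mu_\theta(\bz_t,t,y)-\mu^{\rm pre}(\bz_t,t,y)}{\sigma_t^2}.
\]

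I will also remark that this residual score is constant in $\bz_{t-1}$. Equivalently, $\log\tilde{P}_F^\theta$ is affine in $\bz_{t-1}$, since the quadratic terms cancel. This is consistent with its use as $\mathbf{s}_{\mathrm{fwd}}$ in Algorithm~\ref{alg:training}. Optionally, Eq.~\eqref{eq:mean_reparam} lets me rewrite the difference as
\[
-\frac{1-\alpha_t}{\sqrt{\alpha_t}\sqrt{1-\bar\alpha_t}\,\sigma_t^2}\bigl(\beps_\theta-\beps^{\rm pre}\bigr).
\]

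The only point needing care, which I would flag rather than call a real obstacle, is the shared-variance assumption. If the finetuned model had a different variance, a residual term linear in $\bz_{t-1}$ would survive. The statement fixes identical $\sigma_t^2$, so the cancellation is exact.
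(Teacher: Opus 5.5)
Your proposal is correct and follows essentially the same route as the paper: write each Gaussian score as $-(\bz_{t-1}-\mu)/\sigma_t^2$, subtract so the $\bz_{t-1}/\sigma_t^2$ terms cancel under the shared variance, and identify the difference with $\nabla_{\bz_{t-1}}\log\tilde{P}_F^\theta$ via the definition of the residual ratio. Your extra remarks are accurate but not needed for the proof: that the residual score is constant in $\bz_{t-1}$, and the $\beps$-reparameterized form of $\mu_\theta-\mu^{\rm pre}$.
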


\begin{proof}
The Gaussian scores with respect to the next denoised state $\bz_{t-1}$ are
\begin{align}
\nabla_{\bz_{t-1}} \log P_F^\theta &= -\frac{\bz_{t-1} - \mu_\theta(\bz_t, t,y)}{\sigma_t^2}, \qquad \nabla_{\bz_{t-1}} \log P_F^{\rm pre} = -\frac{\bz_{t-1} - \mu^{\rm pre}(\bz_t, t,y)}{\sigma_t^2}.
\end{align}
Subtracting gives
\begin{align}
\nabla_{\bz_{t-1}} \log P_F^\theta - \nabla_{\bz_{t-1}} \log P_F^{\rm pre}
&=
\frac{\mu_\theta(\bz_t, t,y) - \mu^{\rm pre}(\bz_t, t,y)}{\sigma_t^2},
\end{align}
which is exactly $\nabla_{\bz_{t-1}} \log \tilde{P}_F^\theta(\bz_{t-1} \mid \bz_t,y)$ by definition of the residual transition ratio.
\end{proof}

\textbf{Interpretation.}
Eq.~\eqref{eq:loss_forward} uses exactly this forward-vs-pretrained residual score.
This is a local score-matching identity for the transition ratio; it does not by itself prove the global telescoping statement of Theorem~\ref{thm:distributional}.

\subsection{Reverse Gaussian Score}
\label{app:reverse_gaussian_score}

\begin{proposition}[Gaussian reverse transition score]
\label{prop:reverse_score}
For a Gaussian denoising transition
\[
P_F^\theta(\bz_{t-1} \mid \bz_t,y)
=
\mathcal{N}(\mu_\theta(\bz_t,t,y), \sigma_t^2 \mathbf I),
\]
with variance independent of $\bz_t$, the score with respect to the conditioning latent is
\begin{align}
\nabla_{\bz_t}\log P_F^\theta(\bz_{t-1} \mid \bz_t,y)
&=
\frac{1}{\sigma_t^2}J_{\mu_\theta}(\bz_t,t,y)^\top(\bz_{t-1} - \mu_\theta(\bz_t,t,y)).
\end{align}
\end{proposition}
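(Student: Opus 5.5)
We write out the Gaussian log-density and differentiate it in the conditioning variable using the chain rule. For fixed $\bz_{t-1}$ and $y$, the transition density is
\begin{align*}
\log P_F^\theta(\bz_{t-1}\mid\bz_t,y)
&=
-\frac{d}{2}\log(2\pi\sigma_t^2)
-\frac{1}{2\sigma_t^2}\big\|\bz_{t-1}-\mu_\theta(\bz_t,t,y)\big\|^2,
\end{align*}
where $d$ is the latent dimension. Because $\sigma_t^2$ does not depend on $\bz_t$ by hypothesis, the normalizing term is constant in $\bz_t$ and drops out under $\nabla_{\bz_t}$. This is the only point where the assumption of state-independent variance is used. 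If the variance depended on $\bz_t$, an additional log-determinant term and a variance-derivative term would appear.

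Next, set $r(\bz_t) := \bz_{t-1}-\mu_\theta(\bz_t,t,y)$, viewed as a map of $\bz_t$ with $\bz_{t-1}$ held fixed. Its Jacobian is $-J_{\mu_\theta}(\bz_t,t,y)$. The gradient of $\tfrac12\|r\|^2$ equals $J_r^\top r$, which here is $-J_{\mu_\theta}^\top(\bz_{t-1}-\mu_\theta)$. Multiplying by $-1/\sigma_t^2$ therefore gives
\[
\frac{1}{\sigma_t^2}J_{\mu_\theta}^\top(\bz_{t-1}-\mu_\theta),
\]
which is the claimed expression. This step requires only that $\mu_\theta$ be differentiable in $\bz_t$, which holds for the neural parameterization in Eq.~\eqref{eq:mean_reparam}.

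The main point of care is the transpose and sign convention in the chain rule. We use the convention that $J_{\mu_\theta}$ has entries $\partial\mu_i/\partial z_{t,j}$, so the gradient of a scalar is a column vector. This matches the vector--Jacobian product form used in the practical computation described in Appendix~\ref{app:residual_db_losses}. Applying the same argument with $\mu^{\rm pre}$ in place of $\mu_\theta$ gives the pretrained term used in $\mathcal{L}_{\mathrm{rev}}$.
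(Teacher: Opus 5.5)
Your proof is correct and follows the same route as the paper: write the log-density as $-\frac{1}{2\sigma_t^2}\|\bz_{t-1}-\mu_\theta(\bz_t,t,y)\|^2$ plus a $\bz_t$-independent constant, then apply the chain rule through $\mu_\theta$. One small nit: the constant-variance assumption is used not only to drop the normalizer but also to pull $1/\sigma_t^2$ outside the gradient of the quadratic term, so it is not the ``only point'' of use. Your remark about an extra variance-derivative term already covers this.
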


\begin{proof}
Writing
\[
\log P_F^\theta(\bz_{t-1} | \bz_t,y)
=
-\frac{1}{2\sigma_t^2}\|\bz_{t-1} - \mu_\theta(\bz_t,t,y)\|^2 + C_t,
\]
where $C_t$ does not depend on $\bz_t$, the chain rule gives
\begin{align}
\nabla_{\bz_t}\log P_F^\theta(\bz_{t-1} | \bz_t,y)
&=
-\frac{1}{2\sigma_t^2}
\nabla_{\bz_t}
\left[(\bz_{t-1} - \mu_\theta(\bz_t,t,y))^\top(\bz_{t-1} - \mu_\theta(\bz_t,t,y))\right] \\
&=
\frac{1}{\sigma_t^2}J_{\mu_\theta}(\bz_t,t,y)^\top(\bz_{t-1} - \mu_\theta(\bz_t,t,y)).
\end{align}
\end{proof}

This is the expression used in the implementation of the reverse residual score.

\subsection{Adapting \texorpdfstring{$\nabla$}{nabla}-DB for Forget Energies}
\label{app:forget_energy_adaptation}

For every prompt $y$, the terminal conditional density ratio $\exp(-\widetilde{E}_{\mathcal{C}}(\bx))$ is always positive, so the standard GFlowNet requirement that terminal weights be nonnegative is automatically satisfied.

\begin{proposition}[Derivative of the thresholded forget energy]
\label{prop:threshold_grad}
Let
\begin{align}
\widetilde{E}_{\mathcal{C}}(\bx)
&=
\lambda_{\text{scale}} \cdot
\big(\exp\!\big(\alpha \cdot \max(s_{\mathcal{C}}(\bx) - \tau,\; 0)\big) - 1\big).
\end{align}
Then for $s_{\mathcal{C}}(\bx) \neq \tau$,
\begin{align}
\frac{\partial \widetilde{E}_{\mathcal{C}}}{\partial s_{\mathcal{C}}}
=
\begin{cases}
\lambda_{\text{scale}} \cdot \alpha \cdot \exp(\alpha(s_{\mathcal{C}} - \tau)), & \text{if } s_{\mathcal{C}} > \tau, \\
0, & \text{if } s_{\mathcal{C}} < \tau.
\end{cases}
\end{align}
At the kink $s_{\mathcal{C}}(\bx)=\tau$, $\widetilde{E}_{\mathcal{C}}$ is not differentiable; the implementation uses the zero subgradient.
\end{proposition}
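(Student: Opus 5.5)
The claim is a one-variable calculus statement about the scalar map $s\mapsto \widetilde{E}_{\mathcal{C}}$. I will view $\widetilde{E}_{\mathcal{C}}$ as the composition $\phi(s)=\lambda_{\text{scale}}\bigl(\exp(\alpha\, m(s))-1\bigr)$ with $m(s)=\max(s-\tau,0)$, evaluated at $s=s_{\mathcal{C}}(\bx)$. The plan is to split into the two open regions $s>\tau$ and $s<\tau$, differentiate there, and then treat the point $s=\tau$ separately.

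\emph{Region $s>\tau$.} On the open set $(\tau,\infty)$ we have $m(s)=s-\tau$, which is smooth with $m'(s)=1$. Hence $\phi(s)=\lambda_{\text{scale}}(\exp(\alpha(s-\tau))-1)$ holds on a neighbourhood of $s$, and the chain rule gives
\[
\phi'(s)=\lambda_{\text{scale}}\,\alpha\,\exp(\alpha(s-\tau)).
\]

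\emph{Region $s<\tau$.} On the open set $(-\infty,\tau)$ we have $m\equiv 0$, so $\phi\equiv\lambda_{\text{scale}}(e^0-1)=0$ on a neighbourhood of $s$. Its derivative is therefore $0$. These two cases together give the displayed formula.

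\emph{The kink $s=\tau$.} I will compute the one-sided derivatives of $\phi$ at $\tau$. From the left the difference quotients are identically zero, so the left derivative is $0$. From the right,
\[
\frac{\phi(\tau+h)-\phi(\tau)}{h}=\frac{\lambda_{\text{scale}}(e^{\alpha h}-1)}{h}\to\lambda_{\text{scale}}\,\alpha \quad (h\downarrow 0).
\]
When $\lambda_{\text{scale}}\alpha\neq 0$ these one-sided derivatives differ, so $\phi$ is not differentiable at $\tau$. In that case, because $\phi$ is convex, its subdifferential at $\tau$ is $[0,\lambda_{\text{scale}}\alpha]$, which contains $0$. This justifies the implementation's choice of the zero subgradient. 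If one also wants the gradient with respect to $\bx$, the chain rule gives $\nabla_{\bx}\widetilde{E}_{\mathcal{C}}=\phi'(s_{\mathcal{C}}(\bx))\,\nabla_{\bx}s_{\mathcal{C}}(\bx)$ wherever $s_{\mathcal{C}}(\bx)\neq\tau$. This assumes the CLIP cosine similarity is differentiable, which holds away from zero embeddings.

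The only step requiring care is this kink argument. It needs the implicit assumptions $\lambda_{\text{scale}}>0$ and $\alpha>0$, which I will state explicitly; if $\lambda_{\text{scale}}\alpha=0$, then $\phi\equiv 0$ and is differentiable everywhere.
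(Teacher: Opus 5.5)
Your proof is correct and follows the paper's argument. You differentiate on the open regions $s_{\mathcal{C}}<\tau$ and $s_{\mathcal{C}}>\tau$, then show non-differentiability at the kink by comparing the one-sided derivatives $0$ and $\lambda_{\text{scale}}\alpha$. Your convexity remark, valid under your explicit assumption $\lambda_{\text{scale}},\alpha>0$, identifies the subdifferential as $[0,\lambda_{\text{scale}}\alpha]$ and so actually justifies the zero subgradient, which is a small sharpening of the paper's ``only a subgradient is defined.''
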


\begin{proof}
If $s_{\mathcal{C}}(\bx) < \tau$, then the maximum term is zero, so $\widetilde{E}_{\mathcal{C}}(\bx)=0$ and the derivative is zero.
If $s_{\mathcal{C}}(\bx) > \tau$, then
\begin{align}
\widetilde{E}_{\mathcal{C}}(\bx)
&=
\lambda_{\text{scale}} \cdot \left(\exp\!\big(\alpha(s_{\mathcal{C}}(\bx)-\tau)\big) - 1\right),
\end{align}
whose derivative is the stated exponential term.
At $s_{\mathcal{C}}(\bx)=\tau$, the left derivative is $0$ while the right derivative is $\lambda_{\text{scale}}\alpha$, so only a subgradient is defined.
\end{proof}

For the thresholded energy in Eq.~\eqref{eq:reward}, the energy gradient vanishes below the threshold and grows exponentially above it:
\begin{itemize}[leftmargin=1.6em,itemsep=0.35em,topsep=0.45em]
    \item The threshold $\tau$ ensures benign images receive zero local energy gradient.
    \item The exponential penalty concentrates updates on clearly concept-resembling images.
    \item The positive terminal weight remains compatible with proportional sampling through $\exp(-\widetilde{E}_{\mathcal{C}}(\bx))$.
\end{itemize}

\subsection{Optimization Caveat}
\label{app:optimization_caveat}

The training objective is a nonconvex score-matching surrogate over the LoRA parameters and the auxiliary residual parameterization.
We therefore do not claim a general ``if and only if'' theorem from $\mathcal{L}=0$ to distributional correctness, nor a global gradient-descent convergence rate for the LoRA-parameterized model.
The rigorous statement is Theorem~\ref{thm:distributional}: if the learned residual transitions satisfy the integrated ratio-form DB relation together with the data-end boundary condition, then the terminal conditional marginal obeys $p_\theta(\bx \mid y) \propto p_{\mathrm{pre}}(\bx \mid y)\exp(-E(\bx))$ for the chosen terminal energy $E$.
In practice, $\mathcal{L}_{\text{fwd}}$, $\mathcal{L}_{\text{rev}}$, and $\mathcal{L}_{\text{term}}$ are the empirical surrogates used to encourage this condition.

\clearpage

\section{Extended Derivations}
\label{app:derivations}

\subsection{Closed-Form Conditional Gibbs Projection}
\label{app:gibbs}

The main text uses a prompt-conditioned target, with fixed prompt population $\pi(y)$.
The optimization is therefore over conditional distributions $p(\cdot \mid y)$, while $\pi(y)$ is held fixed.

\begin{proposition}[Conditional Gibbs projection for concept unlearning]
\label{prop:gibbs}
Consider the constrained problem
\begin{align}
p^*_{\mathcal{C}}
&=
\argmin_{p}
\E_{y\sim\pi}\!\left[
\KL\!\left(p(\cdot \mid y)\,\|\,p_{\mathrm{pre}}(\cdot \mid y)\right)
\right] \\
\text{s.t.}\qquad
\E_{y\sim\pi,\,\bx\sim p(\cdot \mid y)}
\!\left[E_{\mathcal{C}}(\bx)\right]
&\le \delta_{\mathcal{C}},
\end{align}
over conditional distributions $p(\cdot \mid y)$ that are absolutely continuous with respect to $p_{\mathrm{pre}}(\cdot \mid y)$ for $\pi$-almost every $y$ and have finite prompt-averaged energy.
Assume the feasible set is nonempty and contains a strictly feasible conditional distribution.
Then the minimizer is unique up to $\pi$-null prompt sets and has the form
\begin{align}
p^*_{\mathcal{C}}(\bx \mid y)
&=
\frac{p_{\mathrm{pre}}(\bx \mid y)\exp(-\beta E_{\mathcal{C}}(\bx))}
{Z_{\beta,\mathcal{C}}(y)},
\qquad
Z_{\beta,\mathcal{C}}(y)
\coloneqq
\int p_{\mathrm{pre}}(\bx \mid y)\exp(-\beta E_{\mathcal{C}}(\bx))\,d\bx,
\end{align}
for some shared $\beta \ge 0$ with $Z_{\beta,\mathcal{C}}(y) < \infty$ for $\pi$-almost every $y$.
Moreover, $\beta$ satisfies the KKT conditions
\begin{align}
\beta \ge 0,\qquad
\E_{y\sim\pi,\,\bx\sim p^*_{\mathcal{C}}(\cdot \mid y)}
\!\left[E_{\mathcal{C}}(\bx)\right]
\le \delta_{\mathcal{C}},\qquad
\beta\!\left(
\E_{y\sim\pi,\,\bx\sim p^*_{\mathcal{C}}(\cdot \mid y)}
\!\left[E_{\mathcal{C}}(\bx)\right]
- \delta_{\mathcal{C}}
\right)=0.
\end{align}
\end{proposition}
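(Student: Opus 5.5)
Fix $\beta\ge 0$ and form the Lagrangian relaxation
\[
\mathcal{J}_\beta(p)=\E_{y\sim\pi}\!\left[\KL\!\left(p(\cdot\mid y)\,\|\,p_{\mathrm{pre}}(\cdot\mid y)\right)+\beta\,\E_{\bx\sim p(\cdot\mid y)}[E_{\mathcal{C}}(\bx)]\right]-\beta\delta_{\mathcal{C}} .
\]
The key identity is the Gibbs variational (Donsker--Varadhan) formula, applied separately for each prompt. Since $E_{\mathcal{C}}\ge 0$ and $\beta\ge 0$, we have $0<Z_{\beta,\mathcal{C}}(y)\le 1$, so $Z$ is automatically finite and positive. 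Let $q_\beta(\cdot\mid y)$ denote the tilted density $p_{\mathrm{pre}}e^{-\beta E_{\mathcal{C}}}/Z_{\beta,\mathcal{C}}(y)$. Then for any $p(\cdot\mid y)\ll p_{\mathrm{pre}}(\cdot\mid y)$ with finite energy,
\[
\KL(p\,\|\,p_{\mathrm{pre}})+\beta\E_p[E_{\mathcal{C}}]=\KL(p\,\|\,q_\beta)-\log Z_{\beta,\mathcal{C}}(y).
\]
This follows by expanding $\log(p/p_{\mathrm{pre}})=\log(p/q_\beta)-\beta E_{\mathcal{C}}-\log Z$. Integrating against $\pi$, $\mathcal{J}_\beta$ is uniquely minimized, up to $\pi$-null sets, at $p=q_\beta$. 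Uniqueness comes from $\KL(p\,\|\,q_\beta)=0$ iff $p=q_\beta$ a.e. The minimum value is the dual function $g(\beta)=-\E_\pi[\log Z_{\beta,\mathcal{C}}(y)]-\beta\delta_{\mathcal{C}}$.

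Next, choose $\beta$ by a case split.

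\textbf{Case 1:} $\beta=0$ and $p_{\mathrm{pre}}$ is itself feasible. Then $p_{\mathrm{pre}}$ attains $\KL=0$, so it is the unique minimizer and the KKT conditions hold trivially.

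\textbf{Case 2:} $p_{\mathrm{pre}}$ is infeasible. Consider $\phi(\beta)=\E_\pi\E_{q_\beta}[E_{\mathcal{C}}]=-\tfrac{d}{d\beta}\E_\pi\log Z_{\beta,\mathcal{C}}$. This function is nonincreasing, because its derivative is $-\E_\pi\mathrm{Var}_{q_\beta}(E_{\mathcal{C}})\le 0$. It is continuous on $(0,\infty)$ by dominated convergence, using $E e^{-\beta E}\le 1/(e\beta)$. Its limit as $\beta\to\infty$ is $\E_\pi[\operatorname{ess\,inf}$ of $E_{\mathcal{C}}$ under $p_{\mathrm{pre}}(\cdot\mid y)]$. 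Strict feasibility of some $\bar p\ll p_{\mathrm{pre}}$ forces this limit to be $<\delta_{\mathcal{C}}$. So either by the intermediate value theorem there is $\beta^*>0$ with $\phi(\beta^*)=\delta_{\mathcal{C}}$, giving complementary slackness with an active constraint, or the set is handled by concavity of $g$ and Slater's condition.

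I would actually package this step as strong duality. The primal objective is convex in $p$ and the constraint is linear. Slater's condition is the assumed strictly feasible point. Hence the dual is attained at some $\beta^*\ge 0$ with zero duality gap, and the KKT conditions stated in Proposition~\ref{prop:gibbs} follow.

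Finally, to show $q_{\beta^*}$ is the primal optimum: for any feasible $p$,
\[
\E_\pi\KL(p\,\|\,p_{\mathrm{pre}})\ \ge\ \mathcal{J}_{\beta^*}(p)\ \ge\ \mathcal{J}_{\beta^*}(q_{\beta^*})\ =\ \E_\pi\KL(q_{\beta^*}\,\|\,p_{\mathrm{pre}}).
\]
The first inequality uses feasibility of $p$ and $\beta^*\ge 0$. The last equality uses complementary slackness. Uniqueness follows because equality in the middle step forces $\E_\pi\KL(p\,\|\,q_{\beta^*})=0$. The multiplier $\beta^*$ is shared across prompts because the constraint is a single prompt-averaged inequality.

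The main obstacle is the infinite-dimensional duality and attainment step. Continuity of $\phi$ at $\beta=0^+$ needs $\E_\pi\E_{p_{\mathrm{pre}}}[E_{\mathcal{C}}]$ to be finite or else handled by monotone convergence. I would sidestep a general Slater theorem by using the explicit monotone one-parameter family $\phi(\beta)$ and the intermediate value argument above. This gives a direct construction of $\beta^*$, checked against the sandwich inequality, and avoids appealing to abstract convex duality.
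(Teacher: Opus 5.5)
Your proof is correct, and it takes a genuinely different route from the paper's. The paper forms the Lagrangian with the shared multiplier $\beta$ and a prompt-wise normalization multiplier $\lambda(y)$, sets the first variation in $p(\bx\mid y)$ to zero, solves for $p_{\mathrm{pre}}(\bx\mid y)\exp(-1-\lambda(y)-\beta E_{\mathcal C}(\bx))$ and normalizes. It then simply states the KKT conditions, citing strict convexity for uniqueness and Slater for sufficiency.

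You instead minimize the Lagrangian exactly through the Gibbs (Donsker--Varadhan) identity $\KL(p\,\|\,p_{\mathrm{pre}})+\beta\E_p[E_{\mathcal C}]=\KL(p\,\|\,q_\beta)-\log Z_{\beta,\mathcal C}(y)$. You then construct $\beta^*$ from the monotone map $\phi$ and certify optimality and uniqueness with the weak-duality sandwich.

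The paper's route is shorter and shows where the exponential form comes from. Yours is more rigorous in several ways:
\begin{itemize}
\item It needs no differentiation in function space, so there is no tacit positivity or interior assumption on $p$ and no normalization multiplier.
\item It proves that a multiplier satisfying the KKT conditions exists, which the paper takes for granted. Slater is what supplies such a $\beta$; KKT conditions are sufficient for convex problems without it.
\item It gives existence of the minimizer rather than presupposing it.
\item The bound $0<Z_{\beta,\mathcal C}(y)\le 1$ makes the finiteness clause automatic.
\end{itemize}

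Two repairs are needed.
\begin{enumerate}
\item The bound $Ee^{-\beta E}\le 1/(e\beta)$ controls only the numerator of $\E_{q_\beta}[E_{\mathcal C}]$. Because $Z_{\beta,\mathcal C}(y)$ can be arbitrarily small across prompts, it is not a $\pi$-integrable dominating function. Use monotonicity instead: $\E_{q_\beta}[E_{\mathcal C}]\le\E_{q_{\beta_0}}[E_{\mathcal C}]$ for $\beta\ge\beta_0$. This requires $\phi(\beta_0)<\infty$.
\item Commit to the intermediate-value construction rather than the abstract strong-duality fallback. Dual attainment alone does not give feasibility and complementary slackness of $q_{\beta^*}$, which your final equality uses. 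You would additionally need a primal minimizer to exist, or the dual function to be differentiable at $\beta^*$.
\end{enumerate}

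For bounded energies, such as the paper's CLIP-based $\widetilde E_{\mathcal C}$ (cosine similarity is at most $1$), everything closes from the stated hypotheses alone. In that case $\phi$ is continuous on $[0,\infty)$ by bounded convergence, and $\E_\pi\KL(q_\beta\,\|\,p_{\mathrm{pre}})\le\beta\sup E_{\mathcal C}$.

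For unbounded $E_{\mathcal C}$, both your proof and the paper's tacitly need extra regularity, for instance a strictly feasible point with finite objective. Otherwise there are examples where every feasible $p$ has infinite prompt-averaged KL, and uniqueness fails.
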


\begin{proof}
The feasible set is convex, and the prompt-averaged conditional KL is strictly convex in $p(\cdot \mid y)$ for $\pi$-almost every $y$, so any optimizer is unique up to $\pi$-null prompt sets.
Because a strictly feasible point exists, Slater's condition holds and the KKT conditions are sufficient.
Form the Lagrangian with shared multiplier $\beta \ge 0$ for the averaged forgetting constraint and a prompt-dependent normalization multiplier $\lambda(y)$:
\begin{align}
\mathcal{L}(p, \beta, \lambda(\cdot))
&=
\E_{y\sim\pi}\!\left[
\int p(\bx \mid y)\log\frac{p(\bx \mid y)}{p_{\mathrm{pre}}(\bx \mid y)}\,d\bx
\right] \nonumber\\
&\quad
+ \beta\!\left(
\E_{y\sim\pi}\!\left[
\int p(\bx \mid y)E_{\mathcal{C}}(\bx)\,d\bx
\right]
-\delta_{\mathcal{C}}
\right) \nonumber\\
&\quad
+ \E_{y\sim\pi}\!\left[
\lambda(y)\!\left(\int p(\bx \mid y)\,d\bx - 1\right)
\right].
\end{align}
Setting the first variation with respect to $p(\bx \mid y)$ to zero for each prompt gives
\begin{align}
\log\frac{p(\bx \mid y)}{p_{\mathrm{pre}}(\bx \mid y)}
+ 1 + \beta E_{\mathcal{C}}(\bx) + \lambda(y)
&=0,
\end{align}
hence
\begin{align}
p(\bx \mid y)
&=
p_{\mathrm{pre}}(\bx \mid y)
\exp\!\left(-1-\lambda(y)-\beta E_{\mathcal{C}}(\bx)\right).
\end{align}
Normalizing for each prompt yields
\begin{align}
\exp(1+\lambda(y))
&=
\int p_{\mathrm{pre}}(\bx \mid y)\exp(-\beta E_{\mathcal{C}}(\bx))\,d\bx
\eqqcolon
Z_{\beta,\mathcal{C}}(y),
\end{align}
so
\begin{align}
p^*_{\mathcal{C}}(\bx \mid y)
&=
\frac{p_{\mathrm{pre}}(\bx \mid y)\exp(-\beta E_{\mathcal{C}}(\bx))}
{Z_{\beta,\mathcal{C}}(y)}.
\end{align}
The remaining KKT conditions are precisely the shared-multiplier conditions for the prompt-averaged constraint:
\begin{align}
\beta \ge 0,\qquad
\E_{y\sim\pi,\,\bx\sim p^*_{\mathcal{C}}(\cdot \mid y)}
\!\left[E_{\mathcal{C}}(\bx)\right]
\le \delta_{\mathcal{C}},\qquad
\beta\!\left(
\E_{y\sim\pi,\,\bx\sim p^*_{\mathcal{C}}(\cdot \mid y)}
\!\left[E_{\mathcal{C}}(\bx)\right]
-\delta_{\mathcal{C}}
\right)=0.
\end{align}
\end{proof}

Thus $\beta$ is tied to the active constraint level in the stated constrained problem.
When $\beta$ is instead chosen directly as a hyperparameter, one is solving the relaxed penalized objective rather than fixing $\delta$ in advance.
In the thresholded practical surrogate used in the main method, this scalar strength is absorbed into $\widetilde{E}_{\mathcal{C}}$ through $\lambda_{\text{scale}}$, while the target remains prompt-conditioned through $p_{\mathrm{pre}}(\bx \mid y)$ and $Z_{\beta,\mathcal{C}}(y)$.

\subsection{Gaussian Instantiation of \texorpdfstring{$\nabla$}{nabla}-DB}
\label{app:gaussian}

For Gaussian denoising transitions $P_F^\theta(\bz_{t-1} \mid \bz_t,y) = \mathcal{N}(\mu_\theta(\bz_t, t,y), \sigma_t^2 \mathbf{I})$ and $P_F^{\rm pre}(\bz_{t-1} \mid \bz_t,y) = \mathcal{N}(\mu^{\rm pre}(\bz_t, t,y), \sigma_t^2 \mathbf{I})$:
\begin{align}
\nabla_{\bz_{t-1}} \log P_F^\theta &= -\frac{\bz_{t-1} - \mu_\theta(\bz_t, t,y)}{\sigma_t^2}, \\
\nabla_{\bz_{t-1}} \log P_F^{\rm pre} &= -\frac{\bz_{t-1} - \mu^{\rm pre}(\bz_t, t,y)}{\sigma_t^2}.
\end{align}

In the DDPM framework with matched variances, the residual forward score simplifies to
\begin{align}
\nabla_{\bz_{t-1}} \log \tilde{P}_F^\theta(\bz_{t-1} \mid \bz_t,y) &= \frac{\mu_\theta(\bz_t, t,y) - \mu^{\rm pre}(\bz_t, t,y)}{\sigma_t^2}.
\end{align}

This is the finetuned-vs-pretrained model deviation divided by the noise variance.
In our implementation, this is computed as \texttt{(m\_theta - m\_ref) / sigma\_sq\_t}.

\subsection{Non-Ideal Model Correction}
\label{app:correction}

The pretrained model $P_F^{\rm pre}$ was trained with a denoising objective rather than a GFlowNet objective, so it only approximately satisfies detailed balance.
The residual term $g_\phi$ parameterizes this discrepancy within the continuation-score approximation.
In our implementation, $g_\phi$ is prompt-independent; prompt dependence enters through the denoiser-based clean estimate $\hat{\bx}_\theta(\bz_t,t,y)$.

Let $\Delta_t$ be the pretrained model's approximation error at timestep $t$.
The forward-looking parameterization becomes
\begin{align}
\nabla \log \tilde{F}(\bz_t,y) &= -\nabla_{\bz_t} \widetilde{E}_{\mathcal{C}}(\hat{\bx}_\theta(\bz_t,t,y)) \cdot \bar\alpha_t + \underbrace{g_\phi(\bz_t,t)}_{\approx \Delta_t + \text{higher-order terms}}.
\end{align}

The terminal loss $\|g_\phi(\bz_0)\|^2$ ensures this correction vanishes at the data end, so the boundary flow is set by the chosen terminal energy rather than by an unconstrained correction term.

\clearpage

\section{Additional Experimental Details}
\label{app:exp_details}
\label{app:pipeline}

This section reports the additional implementation details required to reproduce the experiments in Section~\ref{sec:experiments}: the encoders and base diffusion model, the prompt sets used for forgetting and benign retention, the compute budget, and the practical guidance for tuning the small set of method-specific hyperparameters.

\subsection{CLIP Model}
\label{app:clip_model}

We use \texttt{openai/clip-vit-large-patch14} (ViT-L/14, $224\times 224$ input, output dimension 768) for both the forget-energy computation and the evaluation metrics. A single CLIP model is shared across (i) the descriptor set $\mathcal{Q}_{\mathcal{C}}$ used to define the forget energy, and (ii) the evaluation similarity scores reported in the main text. Sharing the encoder ensures that the forget signal during training and the evaluation metric at test time are measured in the same embedding space; a different CLIP variant could be substituted, with the threshold $\tau$ recalibrated, without changing the rest of the pipeline.

\subsection{Base Diffusion Model and Adapters}
\label{app:base_model}

All experiments adapt Stable Diffusion v1.5 (\texttt{runwayml/stable-diffusion-v1-5}). LoRA adapters are inserted into the U-Net cross- and self-attention projections $(W_Q,W_K,W_V,W_O)$ at rank $r=8$, with the standard $\alpha=r$ scaling. Only the LoRA weights $\Delta\theta$ and the auxiliary residual parameterization $g_\phi$ (channels $\{64,128,256,256\}$, time-embedded) receive gradient updates; the U-Net backbone, latent autoencoder, text encoder, and CLIP model are all frozen. We use 50-step DDPM sampling at training and inference time.

\subsection{Prompt Sets and Generation Strategy}
\label{app:prompt_sets}

\paragraph{Why LLM-based prompt generation.}
Prompt construction for concept unlearning is a well-established design choice in the literature, and existing works fall broadly into two camps: \emph{template-based} prompts (a small set of fixed scaffolds with concept-name slots) and \emph{LLM-based} prompts (generated by a language model under task-specific instructions). Both are used widely; we adopt the LLM-based route because it produces a more varied, natural, and semantically diverse prompt set than templates. An LLM can compose grammatically natural phrasings, vary scene context, framing, lighting, and activity, and cover a much broader region of plausible prompt space without manual enumeration. This matters for unlearning evaluation because narrow prompt templates can mask retention failures that only surface under naturalistic prompt distributions. We use GPT-4o-mini, which provides this diversity at a low API cost while still respecting the formatting constraints required for clean diffusion prompts.

\paragraph{Prompt generation strategy.}
Prompts are generated under carefully designed system prompts that enforce three properties: (i) \textbf{visual grounding}---the concept must be a clearly visible, concrete subject that the diffusion model can depict in-frame, with the physical sense chosen for polysemous concepts (e.g., ``apple'' = fruit, not the company); (ii) \textbf{scene and category diversity}---prompts vary across indoor/outdoor settings, day/night, weather, activities, solitary vs.\ small-group scenes, and near vs.\ far framings, with no near-duplicate phrasings; and (iii) \textbf{lexical simplicity}---4--10 words, present tense, $\le 1$ adjective, no special tags, parameters, or stylistic suffixes. The same pipeline is reused for forget and retain prompts, with minor wording changes in the system prompt to switch the target subject and to exclude the unlearning concept token from the retain prompts.

\begin{tcolorbox}[colback=gray!5!white, colframe=gray!75!black, title={\textbf{Forget Prompt Generation -- System Prompt}}, fonttitle=\bfseries, breakable]
{\ttfamily\footnotesize
You are a dataset prompt writer for diffusion models (e.g., Stable Diffusion). Your job is to produce short, simple, natural-language prompts that depict a clearly visible instance of the specified concept in-frame.

GENERAL STYLE: 4--10 words per prompt; one simple main clause; present tense preferred; everyday vocabulary only; 0--1 adjectives; no tags/parameters (no colons, aspect ratios, seeds, CFG, hashtags); minimal punctuation.

VISUAL GROUNDING (MANDATORY): The concept MUST be a visible, concrete subject in the image. If the concept is polysemous, use the physical object sense. The scene must make the concept clearly depictable and in-frame. Match number and determiners consistently (a/an/one/two/many).

DIVERSITY REQUIREMENTS: Across prompts, balance indoor vs.\ outdoor, day vs.\ night, weather conditions, everyday activities, solitary vs.\ small-group scenes, and near vs.\ far framing. Avoid near-duplicates and repeated sentence structure.
}
\end{tcolorbox}

\paragraph{Retain prompt generation.}
For the optional retain-prompt augmentation reported in Section~\ref{sec:results} (Q4), we use a two-stage pipeline. We first ask the LLM to enumerate $\sim$50 candidate related concepts for each unlearning target, covering close siblings in the same category, broader category/hypernym terms, contextually co-occurring concepts, and analogous concepts from other domains, while explicitly excluding synonyms, subtypes, and parts of the target. From this pool we narrow to a smaller neighbor set using CLIP text--text similarity (or denoiser-based score-prediction distance as an alternative). For each selected neighbor, the LLM generates 10 diverse scene prompts under a replay-style system prompt that constrains the output to exclude the unlearning concept token and any obvious synonym.

\begin{tcolorbox}[colback=gray!5!white, colframe=gray!75!black, title={\textbf{Retain Prompt Generation -- System Prompt}}, fonttitle=\bfseries, breakable]
{\ttfamily\footnotesize
You are a prompt generator for Stable Diffusion models.

INPUT: UNLEARN\_CONCEPT (the concept being unlearned; must NOT appear in any prompt) and RELATED\_CONCEPT (the target subject for retention).

CONSTRAINTS: 5--10 words per prompt; present tense; simple, everyday vocabulary; $\le 1$ comma and $\le 1$ adjective per prompt; no special tags or parameters; do not include UNLEARN\_CONCEPT or any obvious synonym/nickname.

CONTENT: Describe simple, plausible scenes involving RELATED\_CONCEPT. Ensure variety: mix indoor/outdoor settings, day/night, weather, activities, solitary vs.\ group, near vs.\ far framing. Do not repeat sentence structure; avoid near-duplicates. Keep all prompts safe and neutral in tone.

OUTPUT: Exactly 10 prompts, one per line. No numbering, bullet points, section titles, or concept labels.
}
\end{tcolorbox}

\paragraph{Prompt sets used in experiments.}
We use four prompt sets per concept: a forget set used during training, a held-out forget set used at evaluation, a benign retention set covering nearby but distinct concepts, and a general benign set covering unrelated content. Sizes and representative phrasings are summarized below.
\begin{itemize}[leftmargin=1.6em,itemsep=0.35em,topsep=0.45em]
    \item \textbf{Van~Gogh forget (training):} 50 prompts of the form ``a painting in the style of Van~Gogh'', ``Van~Gogh starry night'', ``Van~Gogh self-portrait'', etc. These are the prompts on which residual $\nabla$-DB losses are computed.
    \item \textbf{Van~Gogh forget (held-out):} a disjoint set of 50 forget prompts used only for evaluation, ensuring that the reported forget rate is not driven by training-prompt memorization.
    \item \textbf{Van~Gogh benign:} 200 prompts spanning landscapes, portraits, animals, and objects in non-Van~Gogh styles, used to measure benign-concept retention.
    \item \textbf{Pikachu forget (training/held-out):} 50 + 50 prompts of the form ``pikachu'', ``a picture of pikachu'', ``pikachu in a forest'', etc.
    \item \textbf{Pikachu benign:} 200 prompts for non-Pikachu characters and general objects.
\end{itemize}
The same retention prompts are used across methods, so retention numbers are directly comparable.

\subsection{Optimization and Schedule}
\label{app:optimization}

LoRA parameters and $\phi$ are jointly optimized with AdamW at learning rate $1\!\times\!10^{-4}$, $(\beta_1,\beta_2)=(0.9,0.999)$, weight decay $10^{-2}$, and gradient clipping at norm $1.0$. We use a batch size of 4 forget prompts, with one denoising trajectory per prompt and $|\mathcal{T}|=0.1T=5$ score-matched timesteps per trajectory. Training runs for 1{,}000 batches per concept, which is sufficient for convergence in all experiments. The loss weights in Eq.~\eqref{eq:loss_total} are $\lambda_{\rm rev}=1.0$ and $\lambda_{\rm term}=0.1$.

\subsection{Compute}
\label{app:compute}

All experiments are run on a single NVIDIA A100 (80\,GB). End-to-end training takes 3--5 hours per concept, dominated by the trajectory rollout (Phase 1 of Algorithm~\ref{alg:training}) and the Jacobian-vector product in the reverse residual score (Phase 3). Inference uses standard 50-step DDPM sampling and incurs no additional cost beyond the base diffusion model -- the LoRA correction is folded into the U-Net forward pass.

\subsection{Hyperparameter Sensitivity}
\label{app:hyperparameter_sensitivity}

The method exposes three concept-level hyperparameters: the energy scale $\lambda_{\text{scale}}$, the slope $\alpha$ inside the exponential, and the CLIP-similarity threshold $\tau$. Their roles, recommended ranges, and tuning strategy are as follows.
\begin{itemize}[leftmargin=1.6em,itemsep=0.35em,topsep=0.45em]
    \item \textbf{$\lambda_{\text{scale}}$ (tilt magnitude).} Most sensitive overall: it controls the magnitude of the terminal energy and hence the strength of the unlearning tilt. Too small leaves the concept partially intact; too large destabilizes training. We sweep $\lambda_{\text{scale}}\in\{0.5,1,2,5\}$ and pick the smallest value that achieves a stable forget rate.
    \item \textbf{$\tau$ (decision boundary).} Defines the no-tilt benign region. Set from the empirical CLIP-similarity distribution of the pretrained model on benign prompts: typically the $90$--$95$ percentile of benign similarities, so that benign images receive zero terminal-energy gradient. In our experiments, we keep $\tau$ fixed across concepts within each category, using $\tau=0.21$ for Objects, $\tau=0.22$ for Characters, and $\tau=0.20$ for Style.
    \item \textbf{$\alpha$ (slope).} Controls how sharply the energy grows above $\tau$. Less sensitive than $\lambda_{\text{scale}}$ and $\tau$; we use $\alpha=10$ throughout.
\end{itemize}
\textbf{Recommended tuning order.} (1) Estimate $\tau$ from the benign-prompt CLIP-similarity histogram. (2) Tune $\lambda_{\text{scale}}$ for stable training and adequate forget rate. (3) If forget images are slipping just above $\tau$ without being pushed away, increase $\alpha$ before increasing $\lambda_{\text{scale}}$ further.

\subsection{Evaluation Protocol}
\label{app:eval_protocol}

This section expands the evaluation summary in Section~\ref{sec:metrics} with the full automated evaluation methodology, dataset construction, and metric definitions used to produce the main quantitative results.

\paragraph{VLM-based automated evaluation.}
To automate and standardize the assessment of unlearning and retention, we use \textbf{Qwen2.5-VL-7B-Instruct}~\citep{bai2025qwen2} as an impartial evaluator. Modern VLMs are well suited to this role: they are explicitly trained for open-vocabulary visual question answering, and for binary, closed-ended questions of the form \textit{``Does this image contain $\langle$concept$\rangle$?''} they produce robust judgments across a much wider semantic range than dedicated classifiers. For each accuracy metric, we generate a set of images per prompt and aggregate the VLM's per-image yes/no answers into the reported score.

\paragraph{Why not classifiers.}
We deliberately avoid fixed-vocabulary classifiers (e.g., ImageNet-1k pre-trained models) and the category-specific heads provided by benchmarks such as UnlearnCanvas~\citep{zhang2024unlearncanvas} as primary evaluators, for three reasons:
\begin{itemize}[leftmargin=1.6em,itemsep=0.35em,topsep=0.45em]
    \item \textbf{Restricted concept space.} An ImageNet-trained classifier can only score concepts inside its 1000-class vocabulary, so it cannot evaluate styles (e.g., \textit{Van~Gogh}, \textit{Monet}, cartoon), specific identities (\textit{Brad~Pitt}, \textit{Lionel~Messi}), fictional characters (\textit{Pikachu}, \textit{Mickey~Mouse}), or many fine-grained variations that real unlearning evaluations need. Restricting the evaluator restricts the experimental setup to whatever the classifier happens to know, which is exactly the wrong constraint for studying broad concept unlearning.
    \item \textbf{UnlearnCanvas-specific limitations.} The classifiers shipped with UnlearnCanvas similarly cover mostly objects and styles from its own dataset, and the artistic styles in UnlearnCanvas are visually quite different from the canonical referent of those style names (e.g., the dataset's ``Van~Gogh''-tagged images differ in color palette, brushwork, and composition from the historical paintings); we observed that the provided style classifier does not generalize well to images that follow the canonical style rather than the UnlearnCanvas style. This makes the classifier's verdict unreliable as a stand-alone concept-unlearning metric.
    \item \textbf{Poor performance on AI-generated images.} Standard classifiers are trained on natural-image distributions; they degrade noticeably on diffusion-generated images, where backgrounds, lighting, and scene composition can be unnaturally smooth, stylized, or artifact-prone. These shifts hurt classifier accuracy even when the depicted concept is clearly recognizable to a human or VLM, biasing the resulting unlearning/retention numbers in opaque ways.
\end{itemize}
We empirically verified these failure modes on out-of-distribution and AI-generated images from the original Stable Diffusion model: conventional classifiers produced inconsistent and often incorrect labels, whereas Qwen2.5-VL produced substantially more accurate yes/no judgments aligned with manual inspection. Using a VLM also lifts the cap on which concepts we can evaluate---styles, characters, identities, and arbitrary objects can all be queried with the same evaluator---so the experimental setup is no longer bounded by a particular classifier's vocabulary.

\paragraph{CLIP score as a complementary metric.}
Alongside VLM-based accuracy, we report \textbf{CLIP score} ($U_{\text{clip}}$, $RR_{\text{clip}}$, $GR_{\text{clip}}$) for every prompt set. CLIP score is the standard text--image alignment metric in the diffusion-model unlearning literature and is reported by essentially all existing baselines, so including it ensures our evaluation is directly comparable to prior work and not solely dependent on the VLM judgment. In our setup, the VLM-based accuracy captures whether the concept is present or absent (the discrete ``did it forget?'' question), while CLIP score captures graded text--image alignment on the benign part of the prompt. The two metrics are therefore complementary: a high VLM accuracy with a high CLIP score indicates that the target concept is removed without damaging the surrounding scene description.

\paragraph{Evaluation datasets and scale.}
We construct three disjoint evaluation prompt sets per checkpoint: the forget evaluation set $\mathcal{D}_{\text{eval\_forget}}$, the related-retain set $\mathcal{D}_{\text{eval\_related}}$, and the general-retain set $\mathcal{D}_{\text{eval\_general}}$. All evaluation prompts are held out from the training prompts to test generalization rather than memorization. We track approximately \textbf{60 distinct concepts} (spanning unlearning targets, related concepts, and general themes), with \textbf{20 unique evaluation prompts per concept} and \textbf{8 images per prompt}, yielding approximately \textbf{10{,}000 images per model checkpoint}. The general-retain set is fixed and unrelated to any unlearning target, providing a control group for measuring catastrophic forgetting on the model's base capabilities.

\paragraph{Related concept identification.}
To identify semantically entangled concepts for the related-retain set, we follow the methodology of EraseBench~\citep{amara2025erasebench}. We use GPT-4o-mini with the system prompt: \textit{``Your main task is to help identify concepts for evaluating text-to-image models. The key idea is to identify 3--4 concepts that are semantically entangled with the \textbf{Given Concept}\dots''}. This yields adjacent concepts (e.g., \textit{Pikachu} $\rightarrow$ \textit{Squirtle, Charmander}; \textit{Van Gogh} $\rightarrow$ \textit{Impressionism}) that are used to measure the ripple effect~\citep{amara2025erasebench,zhang2024unlearncanvas}.

\paragraph{Metric definitions.}
For each concept, we report:
\begin{itemize}[leftmargin=1.6em,itemsep=0.35em,topsep=0.45em]
    \item \textbf{Unlearning accuracy ($U_{\text{acc}}$) and CLIP score ($U_{\text{clip}}$).} $U_{\text{acc}}$ measures erasure efficacy; high $U_{\text{acc}}$ indicates successful unlearning. $U_{\text{clip}}$ serves as a proxy for ``in-prompt retainability''~\citep{ren2025sixcd}, ensuring that while the target concept is removed, the model still respects the benign context of the prompt. The ideal outcome is a \textbf{high $U_{\text{acc}}$} (concept removed) paired with a \textbf{high $U_{\text{clip}}$} (context preserved).
    \item \textbf{Related retention ($RR_{\text{acc}}$) and CLIP score ($RR_{\text{clip}}$).} To quantify the ripple effect, we evaluate performance on the semantically entangled concepts identified above. A high $RR_{\text{acc}}$ indicates surgical unlearning with minimal collateral damage. $RR_{\text{clip}}$ ensures that visual quality and text alignment for these neighboring concepts remain intact.
    \item \textbf{General retention ($GR_{\text{acc}}$) and CLIP score ($GR_{\text{clip}}$).} $GR_{\text{acc}}$ assesses stability on a broad set of unrelated concepts, detecting catastrophic forgetting; $GR_{\text{clip}}$ verifies that general instruction-following and text-to-image alignment remain intact after unlearning.
\end{itemize}

\paragraph{Generative quality.}
Beyond accuracy, we evaluate the fundamental generative quality of the unlearned model following standard practice. We adopt the \textbf{Fr\'echet Inception Distance (FID)}~\citep{heusel2017fid}, computed using the evaluation script and methodology used in the UnlearnCanvas benchmark~\citep{zhang2024unlearncanvas}. To measure preservation of distributional quality, we compute FID between images generated by the unlearned model and images generated by the original Stable Diffusion model on the same evaluation prompts, treating the pretrained-model distribution as the reference for benign generation. For all CLIP-score calculations ($U_{\text{clip}}$, $RR_{\text{clip}}$, $GR_{\text{clip}}$), we use \textbf{\texttt{openai/clip-vit-large-patch14}}, ensuring alignment with standard evaluation protocols.

\paragraph{FADE computation and retain-only reference.}
\textbf{FADE}~\citep{cho2025fade} is computed as defined in Eq.~\eqref{eq:fade}; for diffusion models it is tractably estimated via weighted differences of denoising MSE losses between the unlearned model and a retain-only reference. Constructing a meaningful retain-only reference is the main practical challenge of FADE evaluation, since it requires a model that has \textit{never seen} the target concept yet otherwise matches the pretrained-and-finetuned distribution.

We follow the UnlearnCanvas~\citep{zhang2024unlearncanvas} protocol to obtain such a reference. The pipeline has two stages:
\begin{enumerate}[leftmargin=1.6em,itemsep=0.35em,topsep=0.45em]
    \item \textbf{Pretrained-and-finetuned model (used as the unlearning starting point).} Starting from Stable Diffusion v1.5, we finetune on the \emph{full} UnlearnCanvas dataset, which covers all styles. Concept unlearning is performed on top of this checkpoint, so all forgetting and retention metrics are measured relative to a model that has been explicitly trained on the target style.
    \item \textbf{Retain-only reference for FADE.} For each target style $\mathcal{C}_{\mathrm{style}}$, we separately finetune a fresh Stable Diffusion v1.5 on UnlearnCanvas with $\mathcal{C}_{\mathrm{style}}$ removed (i.e., on UnlearnCanvas$\setminus\mathcal{C}_{\mathrm{style}}$). This produces a model whose training distribution matches the pretrained-and-finetuned model in everything except the omitted style, exactly the gold-standard retain-only reference required by FADE.
\end{enumerate}

We carry out this construction for three artistic-style concepts that are well-supported by the UnlearnCanvas dataset---\textbf{Van~Gogh style}, \textbf{Picasso style}, and \textbf{Monet style}---and compute the per-style FADE between the unlearned model and the corresponding retain-only reference. The \textbf{Avg.\ FADE} value reported in Table~\ref{tab:main_comparison} is the mean of these three per-style FADE scores. This averaging matches the granularity at which the retain-only references are practical to train, while still providing a principled distributional comparison against a genuine concept-free baseline rather than only the pretrained model.

\clearpage

\section{Performance Across Multiple Diffusion Architectures}
\label{app:architecture_results}

We additionally evaluate Pikachu concept unlearning across multiple base diffusion architectures using the same evaluation protocol.
The forget-prompt classifier accuracy column is reported directly, so lower values indicate stronger erasure; retain accuracies and CLIP scores remain higher-is-better.

\begin{table}[htbp]
\centering
\caption{Pikachu unlearning performance across base diffusion architectures.}
\label{tab:architecture_pikachu}
\vspace{0.3em}
\setlength{\tabcolsep}{4pt}
\resizebox{\textwidth}{!}{%
\begin{tabular}{@{}lcccccc@{}}
\toprule
\textbf{Base model} & $F_{\mathrm{Acc}}\downarrow$ & $U_{\mathrm{CLIP}}\uparrow$ & $RR_{\mathrm{Acc}}\uparrow$ & $RR_{\mathrm{CLIP}}\uparrow$ & $GR_{\mathrm{Acc}}\uparrow$ & $GR_{\mathrm{CLIP}}\uparrow$ \\
\midrule
SD v1.4 & 0.01 & 28.0 & 0.75 & 30.9 & 0.84 & 31.4 \\
SD v1.5 & 0.00 & 29.9 & 0.76 & 33.0 & 0.88 & 32.1 \\
SD v2 & 0.01 & 29.9 & 0.73 & 32.9 & 0.83 & 32.0 \\
SD3 & 0.01 & 29.8 & 0.73 & 32.9 & 0.86 & 32.2 \\
\bottomrule
\end{tabular}%
}
\end{table}

\clearpage

\section{Per-Concept Quantitative Results}
\label{app:per_concept_results}

This section provides the full per-concept breakdown of the quantitative results summarised in Table~\ref{tab:main_comparison}.
Each table reports results for a single unlearning target across all methods, including unlearning accuracy ($U_{\mathrm{Acc}}\uparrow$), related-retain accuracy ($RR_{\mathrm{Acc}}\uparrow$), general-retain accuracy ($GR_{\mathrm{Acc}}\uparrow$), and the corresponding CLIP scores ($\uparrow$).
For methods where no model targeting the concept directly was available, the entry is omitted (--).

\begin{table}[htbp]
\centering
\caption{Concept unlearning results for \textbf{Apple}.}
\label{tab:concept_apple}
\vspace{0.3em}
\setlength{\tabcolsep}{4pt}
\resizebox{\textwidth}{!}{%
\begin{tabular}{@{}lcccccc@{}}
\toprule
\textbf{Method} & $U_{\mathrm{Acc}}\uparrow$ & $U_{\mathrm{CLIP}}\uparrow$ & $RR_{\mathrm{Acc}}\uparrow$ & $RR_{\mathrm{CLIP}}\uparrow$ & $GR_{\mathrm{Acc}}\uparrow$ & $GR_{\mathrm{CLIP}}\uparrow$ \\
\midrule
ESD-u & \cellcolor{tabthird}0.88 & 28.7 & 0.66 & 29.4 & 0.78 & 31.7 \\
ESD-x & 0.82 & 29.4 & 0.69 & 29.5 & 0.80 & 31.9 \\
UCE & 0.75 & 29.4 & \cellcolor{tabsecond}0.91 & \cellcolor{tabthird}29.9 & 0.81 & 31.3 \\
CA & 0.69 & 29.6 & \cellcolor{tabfirst}0.97 & \cellcolor{tabsecond}30.7 & \cellcolor{tabfirst}0.90 & \cellcolor{tabfirst}32.8 \\
MACE & 0.31 & \cellcolor{tabfirst}31.4 & 0.84 & 29.6 & 0.80 & 31.9 \\
DUO & \cellcolor{tabfirst}0.98 & 29.4 & \cellcolor{tabthird}0.85 & \cellcolor{tabsecond}30.7 & \cellcolor{tabsecond}0.89 & \cellcolor{tabsecond}32.7 \\
EraseFlow & 0.76 & \cellcolor{tabsecond}30.2 & 0.72 & 29.1 & 0.82 & 32.3 \\
Meta & \cellcolor{tabfirst}0.98 & 25.0 & 0.42 & 26.6 & 0.72 & 30.6 \\
SHS & 0.76 & 25.0 & 0.56 & 26.1 & 0.53 & 27.8 \\
EDiff & 0.51 & \cellcolor{tabthird}29.9 & 0.82 & \cellcolor{tabfirst}30.9 & \cellcolor{tabthird}0.88 & \cellcolor{tabthird}32.5 \\
\midrule
\textbf{\textsc{TILDE}} & \cellcolor{tabsecond}0.97 & 26.8 & 0.75 & 28.4 & 0.85 & 31.9 \\
\bottomrule
\end{tabular}%
}
\end{table}

\begin{table}[htbp]
\centering
\caption{Concept unlearning results for \textbf{Banana}.}
\label{tab:concept_banana}
\vspace{0.3em}
\setlength{\tabcolsep}{4pt}
\resizebox{\textwidth}{!}{%
\begin{tabular}{@{}lcccccc@{}}
\toprule
\textbf{Method} & $U_{\mathrm{Acc}}\uparrow$ & $U_{\mathrm{CLIP}}\uparrow$ & $RR_{\mathrm{Acc}}\uparrow$ & $RR_{\mathrm{CLIP}}\uparrow$ & $GR_{\mathrm{Acc}}\uparrow$ & $GR_{\mathrm{CLIP}}\uparrow$ \\
\midrule
ESD-u & \cellcolor{tabsecond}0.99 & 27.3 & 0.41 & 29.5 & 0.76 & 31.5 \\
ESD-x & 0.97 & 27.4 & 0.55 & 30.6 & 0.80 & 31.9 \\
UCE & 0.93 & 26.7 & \cellcolor{tabthird}0.88 & \cellcolor{tabthird}31.0 & 0.85 & 31.8 \\
CA & 0.91 & 27.3 & \cellcolor{tabfirst}0.94 & 30.9 & \cellcolor{tabfirst}0.90 & \cellcolor{tabsecond}32.6 \\
MACE & 0.69 & 26.8 & 0.72 & 29.7 & 0.81 & 31.8 \\
DUO & \cellcolor{tabfirst}1.00 & \cellcolor{tabfirst}29.4 & 0.85 & \cellcolor{tabfirst}32.2 & \cellcolor{tabsecond}0.88 & \cellcolor{tabfirst}32.7 \\
EraseFlow & 0.85 & \cellcolor{tabthird}29.1 & 0.72 & 30.3 & 0.83 & 32.2 \\
Meta & 0.92 & 26.1 & 0.33 & 27.8 & 0.73 & 30.6 \\
SHS & 0.82 & 22.9 & 0.48 & 25.9 & 0.49 & 27.4 \\
EDiff & 0.66 & \cellcolor{tabsecond}29.3 & \cellcolor{tabsecond}0.89 & \cellcolor{tabsecond}32.1 & \cellcolor{tabsecond}0.88 & \cellcolor{tabthird}32.5 \\
\midrule
\textbf{\textsc{TILDE}} & \cellcolor{tabthird}0.98 & 22.6 & 0.78 & 29.4 & \cellcolor{tabthird}0.86 & 31.9 \\
\bottomrule
\end{tabular}%
}
\end{table}

\begin{table}[htbp]
\centering
\caption{Concept unlearning results for \textbf{Brad Pitt}.}
\label{tab:concept_brad_pitt}
\vspace{0.3em}
\setlength{\tabcolsep}{4pt}
\resizebox{\textwidth}{!}{%
\begin{tabular}{@{}lcccccc@{}}
\toprule
\textbf{Method} & $U_{\mathrm{Acc}}\uparrow$ & $U_{\mathrm{CLIP}}\uparrow$ & $RR_{\mathrm{Acc}}\uparrow$ & $RR_{\mathrm{CLIP}}\uparrow$ & $GR_{\mathrm{Acc}}\uparrow$ & $GR_{\mathrm{CLIP}}\uparrow$ \\
\midrule
ESD-u & 0.96 & \cellcolor{tabthird}28.3 & 0.41 & 30.2 & 0.79 & 31.4 \\
ESD-x & 0.94 & \cellcolor{tabthird}28.3 & 0.47 & 31.3 & 0.77 & 31.1 \\
UCE & 0.82 & \cellcolor{tabsecond}32.2 & \cellcolor{tabfirst}0.88 & \cellcolor{tabfirst}34.6 & \cellcolor{tabthird}0.87 & \cellcolor{tabfirst}32.5 \\
CA & \cellcolor{tabsecond}0.99 & 26.9 & \cellcolor{tabsecond}0.70 & \cellcolor{tabthird}33.7 & \cellcolor{tabsecond}0.88 & \cellcolor{tabsecond}32.4 \\
MACE & \cellcolor{tabfirst}1.00 & 17.2 & 0.00 & 16.6 & 0.02 & 19.5 \\
DUO & \cellcolor{tabfirst}1.00 & 26.2 & 0.28 & 29.2 & 0.86 & \cellcolor{tabsecond}32.4 \\
EraseFlow & \cellcolor{tabthird}0.98 & 25.5 & 0.06 & 25.1 & 0.58 & 28.9 \\
Meta & \cellcolor{tabthird}0.98 & 25.5 & 0.12 & 26.9 & 0.75 & 31.0 \\
SHS & 0.73 & 26.1 & 0.50 & 27.6 & 0.43 & 26.3 \\
EDiff & 0.39 & \cellcolor{tabfirst}32.4 & \cellcolor{tabfirst}0.88 & \cellcolor{tabsecond}34.0 & \cellcolor{tabfirst}0.89 & \cellcolor{tabsecond}32.4 \\
\midrule
\textbf{\textsc{TILDE}} & \cellcolor{tabthird}0.98 & 27.2 & \cellcolor{tabthird}0.55 & 30.3 & \cellcolor{tabsecond}0.88 & \cellcolor{tabthird}32.2 \\
\bottomrule
\end{tabular}%
}
\end{table}

\begin{table}[htbp]
\centering
\caption{Concept unlearning results for \textbf{Cartoon Style}.}
\label{tab:concept_cartoon_style}
\vspace{0.3em}
\setlength{\tabcolsep}{4pt}
\resizebox{\textwidth}{!}{%
\begin{tabular}{@{}lcccccc@{}}
\toprule
\textbf{Method} & $U_{\mathrm{Acc}}\uparrow$ & $U_{\mathrm{CLIP}}\uparrow$ & $RR_{\mathrm{Acc}}\uparrow$ & $RR_{\mathrm{CLIP}}\uparrow$ & $GR_{\mathrm{Acc}}\uparrow$ & $GR_{\mathrm{CLIP}}\uparrow$ \\
\midrule
ESD-u & 0.70 & 28.5 & \cellcolor{tabthird}0.80 & 30.1 & 0.83 & \cellcolor{tabthird}32.1 \\
ESD-x & 0.89 & 28.4 & \cellcolor{tabthird}0.80 & 30.2 & 0.84 & \cellcolor{tabthird}32.1 \\
UCE & 0.24 & \cellcolor{tabfirst}32.1 & \cellcolor{tabfirst}0.94 & \cellcolor{tabfirst}31.8 & \cellcolor{tabfirst}0.89 & \cellcolor{tabfirst}32.8 \\
CA & \cellcolor{tabsecond}0.97 & 28.4 & 0.79 & \cellcolor{tabthird}30.5 & \cellcolor{tabthird}0.87 & \cellcolor{tabsecond}32.5 \\
MACE & \cellcolor{tabfirst}1.00 & 21.4 & 0.00 & 20.9 & 0.02 & 19.3 \\
DUO & 0.23 & \cellcolor{tabthird}29.4 & 0.78 & 29.8 & \cellcolor{tabsecond}0.88 & \cellcolor{tabsecond}32.5 \\
EraseFlow & 0.89 & 26.1 & 0.35 & 25.8 & 0.71 & 30.1 \\
Meta & \cellcolor{tabthird}0.93 & 25.6 & 0.13 & 22.8 & 0.48 & 28.2 \\
SHS & \cellcolor{tabsecond}0.97 & 23.9 & 0.26 & 25.8 & 0.48 & 27.3 \\
EDiff & 0.49 & \cellcolor{tabsecond}29.6 & \cellcolor{tabsecond}0.88 & \cellcolor{tabsecond}31.2 & \cellcolor{tabthird}0.87 & \cellcolor{tabsecond}32.5 \\
\midrule
\textbf{\textsc{TILDE}} & 0.82 & 28.9 & \cellcolor{tabthird}0.80 & 30.4 & 0.82 & 31.8 \\
\bottomrule
\end{tabular}%
}
\end{table}

\begin{table}[htbp]
\centering
\caption{Concept unlearning results for \textbf{Cat}.}
\label{tab:concept_cat}
\vspace{0.3em}
\setlength{\tabcolsep}{4pt}
\resizebox{\textwidth}{!}{%
\begin{tabular}{@{}lcccccc@{}}
\toprule
\textbf{Method} & $U_{\mathrm{Acc}}\uparrow$ & $U_{\mathrm{CLIP}}\uparrow$ & $RR_{\mathrm{Acc}}\uparrow$ & $RR_{\mathrm{CLIP}}\uparrow$ & $GR_{\mathrm{Acc}}\uparrow$ & $GR_{\mathrm{CLIP}}\uparrow$ \\
\midrule
ESD-u & 0.49 & 28.7 & 0.77 & 31.0 & 0.80 & 31.8 \\
ESD-x & 0.61 & 27.8 & 0.77 & 30.6 & 0.77 & 31.3 \\
UCE & 0.27 & \cellcolor{tabsecond}31.2 & \cellcolor{tabfirst}0.99 & \cellcolor{tabfirst}32.6 & \cellcolor{tabsecond}0.88 & \cellcolor{tabsecond}32.6 \\
CA & 0.33 & \cellcolor{tabfirst}31.3 & \cellcolor{tabfirst}0.99 & \cellcolor{tabfirst}32.6 & \cellcolor{tabfirst}0.89 & \cellcolor{tabfirst}32.9 \\
MACE & 0.12 & \cellcolor{tabsecond}31.2 & \cellcolor{tabthird}0.95 & \cellcolor{tabthird}32.2 & 0.85 & 32.4 \\
DUO & \cellcolor{tabfirst}0.96 & 27.9 & 0.85 & 32.1 & \cellcolor{tabthird}0.87 & \cellcolor{tabsecond}32.6 \\
EraseFlow & \cellcolor{tabthird}0.94 & 25.4 & 0.27 & 27.0 & 0.53 & 28.7 \\
Meta & \cellcolor{tabsecond}0.95 & 24.4 & 0.46 & 27.5 & 0.72 & 30.3 \\
SHS & 0.88 & 18.8 & 0.11 & 21.1 & 0.16 & 22.8 \\
EDiff & 0.18 & \cellcolor{tabthird}30.9 & \cellcolor{tabsecond}0.98 & \cellcolor{tabsecond}32.4 & \cellcolor{tabthird}0.87 & \cellcolor{tabthird}32.5 \\
\midrule
\textbf{\textsc{TILDE}} & \cellcolor{tabfirst}0.96 & 26.3 & 0.82 & 30.9 & \cellcolor{tabthird}0.87 & 32.1 \\
\bottomrule
\end{tabular}%
}
\end{table}

\begin{table}[htbp]
\centering
\caption{Concept unlearning results for \textbf{Dog}.}
\label{tab:concept_dog}
\vspace{0.3em}
\setlength{\tabcolsep}{4pt}
\resizebox{\textwidth}{!}{%
\begin{tabular}{@{}lcccccc@{}}
\toprule
\textbf{Method} & $U_{\mathrm{Acc}}\uparrow$ & $U_{\mathrm{CLIP}}\uparrow$ & $RR_{\mathrm{Acc}}\uparrow$ & $RR_{\mathrm{CLIP}}\uparrow$ & $GR_{\mathrm{Acc}}\uparrow$ & $GR_{\mathrm{CLIP}}\uparrow$ \\
\midrule
ESD-u & 0.39 & 28.9 & 0.81 & 31.7 & 0.82 & 31.8 \\
ESD-x & 0.46 & 27.8 & 0.77 & 31.0 & 0.79 & 31.3 \\
UCE & \cellcolor{tabthird}0.84 & 25.9 & \cellcolor{tabsecond}0.96 & 30.2 & 0.73 & 29.5 \\
CA & 0.23 & \cellcolor{tabfirst}30.8 & \cellcolor{tabfirst}1.00 & \cellcolor{tabfirst}33.2 & \cellcolor{tabfirst}0.91 & \cellcolor{tabfirst}32.8 \\
MACE & 0.21 & \cellcolor{tabsecond}30.0 & 0.94 & \cellcolor{tabsecond}32.8 & 0.86 & \cellcolor{tabthird}32.4 \\
DUO & 0.81 & 27.5 & 0.76 & 31.6 & \cellcolor{tabthird}0.87 & \cellcolor{tabsecond}32.5 \\
EraseFlow & 0.61 & 27.6 & 0.60 & 29.8 & 0.68 & 30.4 \\
Meta & \cellcolor{tabsecond}0.90 & 25.3 & 0.52 & 28.0 & 0.75 & 30.4 \\
SHS & \cellcolor{tabfirst}0.94 & 21.5 & 0.26 & 22.1 & 0.21 & 23.7 \\
EDiff & 0.12 & \cellcolor{tabthird}29.6 & \cellcolor{tabthird}0.95 & \cellcolor{tabthird}32.6 & \cellcolor{tabsecond}0.88 & \cellcolor{tabthird}32.4 \\
\midrule
\textbf{\textsc{TILDE}} & \cellcolor{tabfirst}0.94 & 21.8 & 0.59 & 27.4 & 0.80 & 31.2 \\
\bottomrule
\end{tabular}%
}
\end{table}

\begin{table}[htbp]
\centering
\caption{Concept unlearning results for \textbf{Golf Ball}.}
\label{tab:concept_golf_ball}
\vspace{0.3em}
\setlength{\tabcolsep}{4pt}
\resizebox{\textwidth}{!}{%
\begin{tabular}{@{}lcccccc@{}}
\toprule
\textbf{Method} & $U_{\mathrm{Acc}}\uparrow$ & $U_{\mathrm{CLIP}}\uparrow$ & $RR_{\mathrm{Acc}}\uparrow$ & $RR_{\mathrm{CLIP}}\uparrow$ & $GR_{\mathrm{Acc}}\uparrow$ & $GR_{\mathrm{CLIP}}\uparrow$ \\
\midrule
ESD-u & \cellcolor{tabthird}0.97 & 27.5 & 0.26 & 28.3 & 0.79 & 31.5 \\
ESD-x & \cellcolor{tabfirst}1.00 & 24.4 & 0.13 & 26.5 & 0.75 & 30.9 \\
UCE & 0.30 & \cellcolor{tabfirst}32.9 & \cellcolor{tabthird}0.65 & \cellcolor{tabfirst}30.5 & \cellcolor{tabsecond}0.88 & \cellcolor{tabthird}32.6 \\
CA & 0.92 & \cellcolor{tabsecond}31.7 & \cellcolor{tabsecond}0.66 & \cellcolor{tabsecond}30.4 & \cellcolor{tabfirst}0.89 & \cellcolor{tabfirst}32.8 \\
MACE & 0.85 & 29.4 & 0.48 & 29.7 & \cellcolor{tabthird}0.83 & 32.2 \\
DUO & 0.77 & \cellcolor{tabthird}31.4 & 0.48 & \cellcolor{tabsecond}30.4 & \cellcolor{tabfirst}0.89 & \cellcolor{tabsecond}32.7 \\
EraseFlow & 0.96 & 29.1 & 0.33 & 28.9 & 0.75 & 31.1 \\
Meta & \cellcolor{tabfirst}1.00 & 24.3 & 0.09 & 25.4 & 0.70 & 30.2 \\
SHS & 0.83 & 23.5 & 0.09 & 23.3 & 0.35 & 25.6 \\
EDiff & 0.48 & 30.1 & 0.53 & \cellcolor{tabthird}29.8 & \cellcolor{tabfirst}0.89 & \cellcolor{tabthird}32.6 \\
\midrule
\textbf{\textsc{TILDE}} & \cellcolor{tabsecond}0.99 & 23.4 & \cellcolor{tabfirst}0.67 & 28.4 & \cellcolor{tabsecond}0.88 & 31.9 \\
\bottomrule
\end{tabular}%
}
\end{table}

\begin{table}[htbp]
\centering
\caption{Concept unlearning results for \textbf{Lionel Messi}.}
\label{tab:concept_lionel_messi}
\vspace{0.3em}
\setlength{\tabcolsep}{4pt}
\resizebox{\textwidth}{!}{%
\begin{tabular}{@{}lcccccc@{}}
\toprule
\textbf{Method} & $U_{\mathrm{Acc}}\uparrow$ & $U_{\mathrm{CLIP}}\uparrow$ & $RR_{\mathrm{Acc}}\uparrow$ & $RR_{\mathrm{CLIP}}\uparrow$ & $GR_{\mathrm{Acc}}\uparrow$ & $GR_{\mathrm{CLIP}}\uparrow$ \\
\midrule
ESD-u & \cellcolor{tabfirst}1.00 & 25.9 & 0.31 & 30.4 & 0.77 & 31.7 \\
ESD-x & \cellcolor{tabsecond}0.99 & 24.9 & 0.26 & 28.7 & 0.70 & 30.6 \\
UCE & \cellcolor{tabfirst}1.00 & 22.5 & 0.25 & 26.0 & 0.49 & 26.7 \\
CA & \cellcolor{tabthird}0.95 & \cellcolor{tabthird}26.8 & \cellcolor{tabthird}0.61 & \cellcolor{tabthird}31.7 & \cellcolor{tabthird}0.87 & \cellcolor{tabthird}32.2 \\
MACE & \cellcolor{tabfirst}1.00 & 16.5 & 0.00 & 17.2 & 0.01 & 19.4 \\
DUO & \cellcolor{tabfirst}1.00 & \cellcolor{tabsecond}27.0 & 0.50 & \cellcolor{tabthird}31.7 & \cellcolor{tabthird}0.87 & \cellcolor{tabfirst}32.6 \\
EraseFlow & \cellcolor{tabsecond}0.99 & 21.3 & 0.08 & 24.3 & 0.45 & 26.9 \\
Meta & \cellcolor{tabfirst}1.00 & 23.9 & 0.19 & 28.3 & 0.74 & 31.1 \\
SHS & 0.94 & 19.4 & 0.12 & 21.5 & 0.26 & 23.5 \\
EDiff & 0.43 & \cellcolor{tabfirst}30.0 & \cellcolor{tabfirst}0.72 & \cellcolor{tabsecond}32.2 & \cellcolor{tabfirst}0.89 & \cellcolor{tabsecond}32.5 \\
\midrule
\textbf{\textsc{TILDE}} & \cellcolor{tabsecond}0.99 & 25.8 & \cellcolor{tabsecond}0.69 & \cellcolor{tabfirst}32.3 & \cellcolor{tabsecond}0.88 & \cellcolor{tabsecond}32.5 \\
\bottomrule
\end{tabular}%
}
\end{table}

\begin{table}[htbp]
\centering
\caption{Concept unlearning results for \textbf{Mickey Mouse}.}
\label{tab:concept_mickey_mouse}
\vspace{0.3em}
\setlength{\tabcolsep}{4pt}
\resizebox{\textwidth}{!}{%
\begin{tabular}{@{}lcccccc@{}}
\toprule
\textbf{Method} & $U_{\mathrm{Acc}}\uparrow$ & $U_{\mathrm{CLIP}}\uparrow$ & $RR_{\mathrm{Acc}}\uparrow$ & $RR_{\mathrm{CLIP}}\uparrow$ & $GR_{\mathrm{Acc}}\uparrow$ & $GR_{\mathrm{CLIP}}\uparrow$ \\
\midrule
ESD-u & \cellcolor{tabthird}0.97 & 28.0 & 0.21 & 30.7 & \cellcolor{tabthird}0.75 & 31.5 \\
ESD-x & \cellcolor{tabsecond}0.99 & 26.7 & 0.17 & 29.8 & 0.72 & 30.9 \\
UCE & 0.87 & \cellcolor{tabsecond}30.3 & \cellcolor{tabthird}0.46 & \cellcolor{tabthird}32.0 & \cellcolor{tabsecond}0.85 & 31.6 \\
CA & 0.96 & 29.6 & 0.32 & \cellcolor{tabsecond}32.4 & \cellcolor{tabfirst}0.87 & \cellcolor{tabfirst}32.6 \\
MACE & \cellcolor{tabfirst}1.00 & 20.8 & 0.00 & 17.0 & 0.02 & 19.5 \\
DUO & 0.93 & \cellcolor{tabthird}29.8 & 0.43 & \cellcolor{tabsecond}32.4 & \cellcolor{tabsecond}0.85 & \cellcolor{tabsecond}32.4 \\
EraseFlow & \cellcolor{tabfirst}1.00 & 25.0 & 0.09 & 27.5 & 0.60 & 29.3 \\
Meta & \cellcolor{tabsecond}0.99 & 26.3 & 0.07 & 27.9 & 0.67 & 30.4 \\
SHS & 0.88 & 22.1 & 0.35 & 25.5 & 0.33 & 24.7 \\
EDiff & 0.53 & \cellcolor{tabfirst}30.7 & \cellcolor{tabfirst}0.75 & \cellcolor{tabfirst}34.1 & \cellcolor{tabfirst}0.87 & \cellcolor{tabthird}32.3 \\
\midrule
\textbf{\textsc{TILDE}} & \cellcolor{tabthird}0.97 & 28.7 & \cellcolor{tabsecond}0.56 & 31.8 & \cellcolor{tabsecond}0.85 & 31.9 \\
\bottomrule
\end{tabular}%
}
\end{table}

\begin{table}[htbp]
\centering
\caption{Concept unlearning results for \textbf{Pikachu}.}
\label{tab:concept_pikachu}
\vspace{0.3em}
\setlength{\tabcolsep}{4pt}
\resizebox{\textwidth}{!}{%
\begin{tabular}{@{}lcccccc@{}}
\toprule
\textbf{Method} & $U_{\mathrm{Acc}}\uparrow$ & $U_{\mathrm{CLIP}}\uparrow$ & $RR_{\mathrm{Acc}}\uparrow$ & $RR_{\mathrm{CLIP}}\uparrow$ & $GR_{\mathrm{Acc}}\uparrow$ & $GR_{\mathrm{CLIP}}\uparrow$ \\
\midrule
ESD-u & \cellcolor{tabthird}0.96 & 27.9 & 0.50 & 30.8 & 0.75 & 31.5 \\
ESD-x & \cellcolor{tabfirst}1.00 & 0.0 & 0.00 & 0.0 & 0.00 & 0.0 \\
UCE & \cellcolor{tabfirst}1.00 & 21.8 & 0.68 & 27.0 & 0.47 & 26.1 \\
CA & \cellcolor{tabsecond}0.97 & 27.9 & \cellcolor{tabthird}0.74 & \cellcolor{tabsecond}33.8 & \cellcolor{tabsecond}0.87 & \cellcolor{tabfirst}32.4 \\
MACE & \cellcolor{tabthird}0.96 & 27.9 & 0.66 & \cellcolor{tabthird}33.6 & \cellcolor{tabthird}0.85 & \cellcolor{tabsecond}32.3 \\
DUO & \cellcolor{tabthird}0.96 & \cellcolor{tabfirst}30.9 & 0.63 & 33.2 & \cellcolor{tabthird}0.85 & \cellcolor{tabsecond}32.3 \\
EraseFlow & \cellcolor{tabsecond}0.97 & 26.6 & 0.42 & 29.7 & 0.64 & 30.2 \\
Meta & \cellcolor{tabsecond}0.97 & 26.2 & 0.45 & 29.2 & 0.69 & 30.0 \\
SHS & 0.83 & 23.6 & 0.38 & 25.1 & 0.27 & 24.3 \\
EDiff & 0.72 & \cellcolor{tabsecond}30.8 & \cellcolor{tabfirst}0.78 & \cellcolor{tabfirst}34.2 & \cellcolor{tabsecond}0.87 & \cellcolor{tabthird}32.2 \\
\midrule
\textbf{\textsc{TILDE}} & \cellcolor{tabfirst}1.00 & \cellcolor{tabthird}29.9 & \cellcolor{tabsecond}0.76 & 33.0 & \cellcolor{tabfirst}0.88 & 32.1 \\
\bottomrule
\end{tabular}%
}
\end{table}

\begin{table}[htbp]
\centering
\caption{Concept unlearning results for \textbf{Van Gogh Style}.}
\label{tab:concept_van_gogh_style}
\vspace{0.3em}
\setlength{\tabcolsep}{4pt}
\resizebox{\textwidth}{!}{%
\begin{tabular}{@{}lcccccc@{}}
\toprule
\textbf{Method} & $U_{\mathrm{Acc}}\uparrow$ & $U_{\mathrm{CLIP}}\uparrow$ & $RR_{\mathrm{Acc}}\uparrow$ & $RR_{\mathrm{CLIP}}\uparrow$ & $GR_{\mathrm{Acc}}\uparrow$ & $GR_{\mathrm{CLIP}}\uparrow$ \\
\midrule
ESD-u & 0.76 & \cellcolor{tabthird}30.1 & \cellcolor{tabthird}0.78 & \cellcolor{tabsecond}32.9 & 0.84 & \cellcolor{tabthird}32.1 \\
ESD-x & 0.81 & 29.4 & 0.75 & 32.2 & 0.81 & 31.7 \\
UCE & 0.49 & \cellcolor{tabfirst}32.5 & \cellcolor{tabfirst}0.91 & \cellcolor{tabfirst}33.6 & \cellcolor{tabfirst}0.89 & \cellcolor{tabfirst}32.6 \\
CA & \cellcolor{tabsecond}0.95 & 27.6 & \cellcolor{tabsecond}0.81 & \cellcolor{tabthird}32.5 & 0.85 & \cellcolor{tabsecond}32.4 \\
MACE & \cellcolor{tabfirst}1.00 & 17.9 & 0.00 & 18.1 & 0.00 & 20.7 \\
DUO & 0.68 & 21.8 & 0.57 & 29.5 & 0.85 & \cellcolor{tabthird}32.1 \\
EraseFlow & \cellcolor{tabthird}0.84 & 29.2 & 0.61 & 30.9 & 0.71 & 30.4 \\
Meta & 0.43 & 20.2 & 0.43 & 24.2 & 0.49 & 25.8 \\
SHS & 0.68 & 20.9 & 0.00 & 21.4 & 0.04 & 19.7 \\
EDiff & 0.29 & \cellcolor{tabsecond}31.7 & \cellcolor{tabsecond}0.81 & 31.0 & \cellcolor{tabsecond}0.88 & 31.6 \\
\midrule
\textbf{\textsc{TILDE}} & \cellcolor{tabsecond}0.95 & 27.1 & 0.75 & 31.5 & \cellcolor{tabthird}0.86 & 31.4 \\
\bottomrule
\end{tabular}%
}
\end{table}

\begin{table}[htbp]
\centering
\caption{Concept unlearning results for \textbf{Monet Style}.}
\label{tab:concept_monet_style}
\vspace{0.3em}
\setlength{\tabcolsep}{4pt}
\resizebox{\textwidth}{!}{%
\begin{tabular}{@{}lcccccc@{}}
\toprule
\textbf{Method} & $U_{\mathrm{Acc}}\uparrow$ & $U_{\mathrm{CLIP}}\uparrow$ & $RR_{\mathrm{Acc}}\uparrow$ & $RR_{\mathrm{CLIP}}\uparrow$ & $GR_{\mathrm{Acc}}\uparrow$ & $GR_{\mathrm{CLIP}}\uparrow$ \\
\midrule
ESD-u & 0.17 & \cellcolor{tabthird}30.3 & \cellcolor{tabsecond}0.87 & 31.1 & \cellcolor{tabthird}0.84 & 31.1 \\
ESD-x & 0.41 & 29.5 & 0.57 & 31.0 & 0.81 & 31.6 \\
UCE & 0.12 & \cellcolor{tabsecond}32.0 & \cellcolor{tabfirst}0.93 & \cellcolor{tabfirst}33.5 & 0.82 & \cellcolor{tabthird}32.0 \\
CA & 0.66 & 28.3 & 0.56 & \cellcolor{tabthird}31.2 & \cellcolor{tabfirst}0.88 & \cellcolor{tabsecond}32.4 \\
MACE & \cellcolor{tabfirst}1.00 & 20.4 & 0.07 & 20.9 & 0.01 & 19.4 \\
DUO & 0.77 & 25.5 & 0.12 & 26.7 & \cellcolor{tabfirst}0.88 & \cellcolor{tabfirst}32.5 \\
EraseFlow & \cellcolor{tabthird}0.84 & 28.3 & 0.60 & 29.4 & 0.73 & 30.2 \\
Meta & 0.71 & 24.5 & 0.26 & 26.0 & 0.69 & 29.9 \\
SHS & 0.76 & 22.9 & 0.26 & 24.8 & 0.36 & 25.6 \\
EDiff & 0.18 & \cellcolor{tabfirst}32.5 & \cellcolor{tabthird}0.85 & \cellcolor{tabsecond}33.1 & \cellcolor{tabsecond}0.87 & \cellcolor{tabsecond}32.4 \\
\midrule
\textbf{\textsc{TILDE}} & \cellcolor{tabsecond}0.87 & 27.3 & 0.74 & 30.9 & 0.83 & 31.5 \\
\bottomrule
\end{tabular}%
}
\end{table}

\end{document}